\documentclass{stars-techreport}

\usepackage{titlesec}
\usepackage{printlen}
\usepackage{graphicx}
\graphicspath{{figs/}}

\usepackage{amsmath}
\usepackage{amssymb}
\usepackage{amsfonts}
\usepackage{amstext}
\usepackage{amsthm}
\usepackage{mathrsfs}
\usepackage{bm}
\usepackage{cancel}
\usepackage{mathtools}
\allowdisplaybreaks

\usepackage{float}

\floatstyle{plain}
\newfloat{infobox}{thb}{none}

\usepackage[hang,flushmargin]{footmisc}

\usepackage{booktabs}

\usepackage{mdframed}
\mdfsetup{%
  backgroundcolor=gray!12,
  innerleftmargin=10pt,
  innertopmargin=10pt,
  innerrightmargin=10pt,
  innerbottommargin=10pt,
  skipabove=10pt
}

\usepackage{tikz}
\usepackage{tikz-3dplot}
\usetikzlibrary{%
  3d,
  calc,
  arrows.meta,
  perspective,
  positioning,
  shapes.arrows,
  decorations.pathmorphing,
  decorations.pathreplacing%
}

\usepackage[
  backend=biber,
  giveninits=true,
  hyperref=true,
  style=ieee-caps]%
{biblatex}
\usepackage{aliascnt}
\usepackage[noabbrev]{cleveref}
\crefname{equation}{}{}

\crefname{appendix}{appendix}{appendices}
\Crefname{appendix}{Appendix}{Appendices}

\newtheorem{theorem}{Theorem}[section]

\newaliascnt{proposition}{theorem}

\aliascntresetthe{proposition}

\newaliascnt{lemma}{theorem}

\aliascntresetthe{lemma}

\newaliascnt{corollary}{theorem}
\newtheorem{corollary}[corollary]{Corollary}
\aliascntresetthe{corollary}

\crefname{theorem}{theorem}{theorems}
\Crefname{theorem}{Theorem}{Theorems}

\crefname{proposition}{proposition}{propositions}
\Crefname{proposition}{Proposition}{Propositions}

\crefname{lemma}{lemma}{lemmas}
\Crefname{lemma}{Lemma}{Lemmas}

\crefname{corollary}{corollary}{corollaries}
\Crefname{corollary}{Corollary}{Corollaries}

\theoremstyle{definition}
\newtheorem{definition}[theorem]{Definition}

\crefname{remark}{remark}{remarks}
\Crefname{remark}{Remark}{Remarks}

\usepackage{xparse}

\let\originalleft\left
\let\originalright\right
\renewcommand{\left}{\mathopen{}\mathclose\bgroup\originalleft}
\renewcommand{\right}{\aftergroup\egroup\originalright}

\NewDocumentCommand\Real{}{ \mathbb{R} }

\NewDocumentCommand\bbm{}{ \begin{bmatrix} } 
\NewDocumentCommand\ebm{}{ \end{bmatrix} }   
\NewDocumentCommand\T{}{\mathsf{T}}          

\NewDocumentCommand\Vector{m}{ \boldsymbol{\mathbf{#1}} }

\NewDocumentCommand\Matrix{m}{ \bm{\mathbf{#1}} }

\NewDocumentCommand\Transpose{m}{ \left.{#1}\right.^\T }

\NewDocumentCommand\Inv{m}{{#1}^{-1}}

\NewDocumentCommand\Determinant{m}{ \mathrm{det}\left(#1\right) }
\NewDocumentCommand\Norm{m}{ \left\Vert#1\right\Vert }

\NewDocumentCommand\Ker{m}{ \mathrm{ker}\left(#1\right) }
\NewDocumentCommand\Rank{m}{ \mathrm{rank}\left(#1\right) }
\NewDocumentCommand\Span{m}{ \mathrm{span}\left\{#1\right\} }

\NewDocumentCommand\Zero{}{ \Matrix{0} }
\NewDocumentCommand\Identity{}{ \Matrix{I} }

\NewDocumentCommand\LieGroupSO{m}{ \mathrm{SO}(#1) }

\NewDocumentCommand\LieGroupSGal{m}{ \mathrm{SGal}(#1) }
\NewDocumentCommand\LieAlgebraSGal{m}{ \mathfrak{sgal}(#1) }

\NewDocumentCommand\Wedge{m}{\left(#1\right)^\wedge}
\NewDocumentCommand\Vee{m}{\left(#1\right)^\vee}

\NewDocumentCommand\Matlog{m}{\mathrm{ln}\left(#1\right)}

\NewDocumentCommand\Matexp{m}{\exp\left(#1\right)}

\NewDocumentCommand\Defined{}{ \triangleq }

\usepackage{scalerel}

\NewDocumentCommand\<{}{\mspace{1mu}}

\makeatletter
\newcommand{\vast}{\bBigg@{3}}
\newcommand{\Vast}{\bBigg@{4}}
\makeatother

\newcommand{\uw}{\Vector{u}^{\wedge}}

\newcommand{\alttau}{\ensuremath{\bar{\tau}}}

\NewDocumentCommand\StateVector{}{ \Vector{\mathcal{X}} }

\NewDocumentCommand\SmallStateVector{}{\scaleobj{0.85}{\StateVector}}

\STARStitle{On the Identifiability of Aided Inertial Navigation\\ Under Measurement Delays: A Geometric Approach}
\STARSdate{August 1, 2026}        %
\STARSauthor{Jonathan Kelly}      %
\STARSidentifier{STARS-2026-001}  %
\STARSmajorrev{1}                 %
\STARSminorrev{11}                %

\begin{document}
\maketitle

\begin{abstract}
In aided inertial navigation, measurements from different sensors are often subject to unknown relative time delays.
Consider a single aiding sensor whose measurements have an unknown but constant delay relative to the inertial-measurement data stream. We study the identifiability of the delay and the initial navigation state parameterizing the trajectory.
Identifiability depends on both the temporal structure of the aiding measurements and the form of the trajectory.
Using the special Galilean group, we determine the minimal number and type of aiding measurements needed to recover the delay and the navigation state.
We also characterize a class of \emph{uninformative} trajectories, for which the delayed measurement model admits a continuous symmetry that prevents unique delay-and-state recovery.
We show that each such trajectory is generated by a constant element of the Galilean Lie algebra, and connect this result to the familiar linearized, Jacobian-based analysis.
\end{abstract}

\section{Introduction}
\label{sec:introduction}

An aided inertial navigation system (INS) fuses inertial measurements with data from additional sensors to estimate the state of a moving vehicle, typically including its position, orientation, and velocity.
The inertial measurement unit (IMU) provides high-rate signals that drive the process model, while lower-rate aiding sensors, such as GNSS receivers, supply observations that correct drift to maintain bounded estimation error over longer time horizons.

In practice, measurements from different sensors are often subject to relative delays.
These delays arise from sensing, processing, and communication latencies, and are commonly modelled as constant but unknown~\cite{2014_Kelly_Determining,2014_Li_Online}.
If not accounted for in the estimator, they can degrade accuracy and, in adverse cases, lead to divergence of the navigation solution.
For a single aiding sensor, the estimation problem therefore involves recovering both the unknown delay and the navigation state from delayed measurements.

Some existing approaches estimate the measurement delay by adding it to the estimator state vector and applying a standard filtering algorithm, such as the extended Kalman filter (EKF).
However, these formulations are often adopted without first establishing whether the augmented system is identifiable (or observable). 
There is an implicit assumption that isolated delayed measurements provide enough information to recover the unknowns.
Our prior work~\cite{2021_Kelly_Question} showed that such augmented-state recursive filters have fundamental structural issues that lead to inconsistent estimates and poorer performance.

In this report, we study the identifiability of aided inertial navigation under measurement time delays.
Prior work on excitation requirements for aided navigation with delayed sensor measurements~\cite{2019_Yang_Degenerate,2023_Yang_Online} is tangentially related: it treats a different measurement model and identifies specific problematic cases through a linearized analysis.
Our focus is different, and more general: we seek to determine the \emph{minimal} sets of measurements required for identifiability, and to characterize the classes of motion for which the delay and initial navigation state can, or cannot, be uniquely recovered.
We show that delayed aided navigation is identifiable only when the available measurements \emph{over time} are sufficiently informative to constrain both the delay and the state.
The analysis is formulated on the special Galilean group, which provides an appropriate state-space framework.
We first analyze the one-dimensional case, where the geometry of the problem is easiest to see, and then extend the analysis to the three-dimensional setting.

A central contribution of this report is to show that the delayed measurement model admits a continuous symmetry, in which changes to the delay can be compensated by changes to the initial navigation state without altering the measurement history.
This symmetry defines a class of \emph{uninformative trajectories} for which the delay and initial state cannot be uniquely recovered.
We characterize this class geometrically, in terms of the Galilean Lie algebra, and connect the exact nonlinear results to the familiar Jacobian-based local tests.
Related ideas appear in the symmetry-based observability analysis developed by Martinelli for non-delayed systems~\cite{2011_Martinelli_State}, but our approach is built around delayed measurement histories.

The remainder of the report is organized as follows.
We begin with some preliminaries in \Cref{sec:preliminaries}.
In \Cref{sec:SGal1_group}, we introduce the one-dimensional special Galilean group $\LieGroupSGal{1}$; identifiability on $\LieGroupSGal{1}$ is analyzed in \Cref{sec:back_to_basics}.
We then consider the three-dimensional special Galilean group $\LieGroupSGal{3}$ in \Cref{sec:SGal3_group} and extend our analysis to the full delayed aided-navigation problem in \Cref{sec:aided_identifiability}.
We conclude in \Cref{sec:conclusion} with a summary of our results.

\section{Preliminaries}
\label{sec:preliminaries}

To begin, we establish a small amount of notation and the main concepts used throughout the report.
Lowercase Latin and Greek letters (e.g., $a$ and $\alpha$) denote scalars, while boldface lower- and uppercase Latin letters (e.g., $\Vector{x}$ and $\Matrix{X}$) denote vectors and matrices, respectively.
We write the $n \times n$ identity matrix as $\Identity_n$ and the $m \times n$ matrix of zeros as $\Zero_{m \times n}$; when the dimensions are clear from context, we often omit the subscripts.
Additional notation is introduced as needed.

\subsection{Identifiability and Observability}
\label{subsec:identifiability}

The main question in this report is whether an unknown delay and other relevant unknown parameters can be uniquely determined, or \emph{identified}, from discrete output measurements over a finite interval.
The parameters typically include the initial navigation state (position, velocity, and orientation) and in some cases bias terms.

The topic of identifiability is closely related to observability.
Roughly speaking, observability concerns recovery of a time-varying state from measured outputs, whereas identifiability concerns recovery of unknown but fixed parameters.
In our setting, the two viewpoints are closely connected: we assume the input history is known, so the delay and initial navigation state can be treated together as parameters to be inferred from measurements.
For this reason, we primarily use the language of identifiability; see, for example,~\cite{2015_Chatzis_Observability} for more details.
We refer to the initial navigation state as the \emph{initial condition} in the sequel.

Several practical points are worth stating at the outset.
First, we study the \emph{noise-free} problem, as is typical in the identifiability literature.
Second, although the delay may take any real value, in practice we are interested in causal delays.
Third, the input and state history must be retained long enough for the delayed outputs to depend on a portion of the stored history.
This basic overlap requirement, and its role in making the estimation problem well posed, was made explicit in our prior work~\cite{2021_Kelly_Question}.

\newpage
We now formalize the notion of local identifiability used throughout the report.
Consider a nonlinear control system of the form
\begin{equation}
\label{eqn:general_delayed_system}
\begin{gathered}
\dot{\Matrix{X}}(t)
=
f\bigl(
\Matrix{X}(t), \Vector{u}(t), \Vector{\theta}
\bigr),
\quad
\Matrix{X}(0) = \Matrix{X}_0 \in \mathcal{M}, \\[1mm]
\Matrix{Y}(t)
=
h\bigl(\Matrix{X}(t - \tau),\,\tau\bigr),
\quad
t \ge \tau,
\end{gathered}
\end{equation}
where $\mathcal{M}$ is an $m$-dimensional smooth manifold, $\Matrix{X}(t) \in \mathcal{M}$ is the state, $\Vector{u}(t) \in \mathcal{U} \subseteq \Real^p$ is a known control input, and $\Vector{\theta} \in \Theta \subseteq \Real^q$ is a vector of additional unknown parameters.
The map $f : \mathcal{M} \times \mathcal{U}\times\Theta \to T\mathcal{M}$ is a smooth vector field, with $f(\Matrix{X},\Vector{u},\Vector{\theta}) \in T_{\Matrix{X}}\mathcal{M}$.
The output $\Matrix{Y}(t) \in \mathcal{N}$ lies on an $\ell$-dimensional smooth manifold $\mathcal{N}$, the map $h:\mathcal{M} \times \Real_{\geq 0} \to \mathcal{N}$ is smooth, and $\tau \geq 0$ is an unknown constant delay.

We collect all the unknowns into a tuple,
\begin{equation*}
\StateVector
\Defined
\left(\Matrix{X}_0,\,\tau,\,\Vector{\theta}\right)
\in \mathcal{P},
\end{equation*}
where $\Matrix{X}_0 = \Matrix{X}(0)$ is the initial condition and $\mathcal{P}$ is the admissible parameter space.
We use $\SmallStateVector\in\Real^d$ for a local-coordinate vector corresponding to $\StateVector$, with $d = \dim(\mathcal{P})$.

We assume the system trajectory is defined for all $t \geq 0$, while measurements are collected over a finite observation window.
For identifiability, output histories should be compared on an interval that is valid for every admissible delay.
Let $\tau_u$ be an upper bound on the admissible delays. We restrict attention to observation intervals beginning no earlier than $\tau_u$, so that the delayed state is well defined for every such delay.
Local identifiability is then determined by whether distinct parameter values can produce the same output history (i.e., measurements) over that interval.

Let $[t_s,\,t_s + T]$ be such an observation interval, with $t_s \geq \tau_u$.
For a fixed known input history sufficient to propagate the state from the initial time to $t_s + T$, each parameter tuple $\StateVector \in \mathcal{P}$ determines an output history over $[t_s,\, t_s + T]$.
Changing $\tau$ shifts where along the trajectory the state is evaluated, making delay identifiability a trajectory-dependent problem.

\begin{definition}[Indistinguishability]
Let $\mathcal{T} \subseteq [t_s,\, t_s + T]$ be a set of observation times.
The parameter tuples $\StateVector^\circ, \StateVector^\star \in \mathcal{P}$ are said to be \emph{indistinguishable} on $\mathcal{T}$ if, for the same known input history, they produce identical outputs at every time in $\mathcal{T}$.
The corresponding delayed states may occur at different times along the trajectory, since the delays in $\StateVector^\circ$ and $\StateVector^\star$ need not be equal.
Otherwise, the parameter tuples are said to be \emph{distinguishable} on $\mathcal{T}$.
\end{definition}

\begin{definition}[Local identifiability]
A parameter tuple $\StateVector^\star \in \mathcal{P}$ is \emph{locally identifiable} on a set of observation times $\mathcal{T} \subseteq [t_s,\, t_s + T]$ if there exists a neighbourhood $\mathcal{V}$ of $\StateVector^\star$ such that $\StateVector^\star$ is distinguishable on $\mathcal{T}$ from every $\StateVector \in \mathcal{V}$ with $\StateVector \neq \StateVector^\star$.
Equivalently, any $\StateVector \in \mathcal{V}$ that produces the same outputs as $\StateVector^\star$ at every time in $\mathcal{T}$ must satisfy
\begin{equation*}
\StateVector = \StateVector^\star.
\end{equation*}
\end{definition}

This local concept is the one used throughout the report.
It is weaker than global identifiability, which would require uniqueness over the entire parameter space, but it is the usual notion for local estimation and linearization-based analyses~\cite{1970_Bellman_Structural,1976_Grewal_Identifiability}.

A standard local test is obtained by linearizing the map from the unknown parameters to the outputs at a finite set of measurement times in $[t_s,\, t_s + T]$.
If the corresponding Jacobian has full column rank at $\StateVector^\star$, then the parameter-to-output map is locally injective, and local identifiability follows.
Conversely, if the Jacobian is rank deficient, then there exist infinitesimal perturbation directions that leave the outputs unchanged to first order.
This does not, by itself, prove full nonlinear unidentifiability, but it does indicate a loss of local information and motivates the nullspace analysis carried out later on.

\subsection{Continuous Symmetries}
\label{subsec:symmetries}

A central idea in our analysis is that unidentifiability can arise from symmetry.
Here, a symmetry is a transformation of the parameters that leaves the outputs unchanged.
When such a transformation exists, distinct parameter values produce the same measurement(s), so the unknowns cannot be uniquely determined.

The most important case for our purposes is a continuous symmetry.
This occurs when the delay and initial condition can be varied together without changing the outputs over the observation window.
Geometrically, the true parameters are then not isolated in parameter space, but lie on a local family of indistinguishable solutions.
Related ideas appear in nonlinear observability, including Martinelli's treatment of continuous symmetries for non-delayed systems~\cite{2011_Martinelli_State}.
The setting considered here is similar, but temporal: the relevant symmetries act on delayed measurement histories over an interval, rather than on measurements at a single instant.
We first give a trajectory-level result, which will be useful later when the symmetry holds over the entire observation window.

\begin{theorem}[Symmetry implies unidentifiability]
\label{thm:symmetry_unidentifiable_continuous}
Let $\StateVector^\star \in \mathcal{P}$.
Suppose there exists $\epsilon > 0$ and a smooth one-parameter family of transformations $\mathcal{S}_\alpha : \mathcal{P} \to \mathcal{P}$, $\alpha \in (-\epsilon, \epsilon)$, such that $\mathcal{S}_0(\StateVector)=\StateVector$ for all $\StateVector$ in a neighbourhood of $\StateVector^\star$.
Suppose further that, for every $\alpha \in (-\epsilon, \epsilon)$, the parameter tuples $\StateVector^\star$ and $\mathcal{S}_\alpha(\StateVector^\star)$ are indistinguishable on $[t_s,\, t_s + T]$.
If $\mathcal{S}_\alpha(\StateVector^\star) \neq \StateVector^\star$ for all $\alpha \in (-\epsilon, \epsilon) \setminus \{0\}$, then $\StateVector^\star$ is not locally identifiable on $[t_s,\, t_s + T]$.
\end{theorem}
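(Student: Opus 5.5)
The plan is to argue by contradiction directly from the definition of local identifiability, using continuity of the family $\alpha \mapsto \mathcal{S}_\alpha(\StateVector^\star)$ at $\alpha = 0$. Suppose, to the contrary, that $\StateVector^\star$ is locally identifiable on $[t_s,\, t_s + T]$. Then there is a neighbourhood $\mathcal{V}$ of $\StateVector^\star$ such that every $\StateVector \in \mathcal{V}$ that is indistinguishable from $\StateVector^\star$ on $[t_s,\, t_s + T]$ satisfies $\StateVector = \StateVector^\star$. We will produce an element of $\mathcal{V}$ that violates this.

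The first step is to use smoothness of the family. Since the map $(\alpha, \StateVector) \mapsto \mathcal{S}_\alpha(\StateVector)$ is smooth, it is in particular continuous in $\alpha$. Evaluate it at the fixed point $\StateVector^\star$ to get the curve $\gamma(\alpha) \Defined \mathcal{S}_\alpha(\StateVector^\star)$ in $\mathcal{P}$. This curve is continuous and satisfies $\gamma(0) = \mathcal{S}_0(\StateVector^\star) = \StateVector^\star$; the hypothesis $\mathcal{S}_0 = \mathrm{id}$ is only needed near $\StateVector^\star$, and $\StateVector^\star$ itself lies in that neighbourhood. Hence $\gamma^{-1}(\mathcal{V})$ is an open subset of $(-\epsilon, \epsilon)$ that contains $0$. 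So there is some $\delta \in (0, \epsilon)$ with $\gamma(\alpha) \in \mathcal{V}$ for all $|\alpha| < \delta$. If one prefers to work in local coordinates $\SmallStateVector \in \Real^d$, the same statement is the usual $\epsilon$–$\delta$ continuity of a smooth curve in a chart around $\StateVector^\star$.

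The second step is to pick any $\alpha_0$ with $0 < |\alpha_0| < \delta$. By the indistinguishability hypothesis, $\gamma(\alpha_0)$ and $\StateVector^\star$ produce identical outputs at every time in $[t_s,\, t_s + T]$. By the first step, $\gamma(\alpha_0) \in \mathcal{V}$. By the non-triviality hypothesis, $\gamma(\alpha_0) \neq \StateVector^\star$. This contradicts the defining property of $\mathcal{V}$. Since $\mathcal{V}$ was an arbitrary neighbourhood, no such neighbourhood exists, and $\StateVector^\star$ is not locally identifiable.

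There is no substantial obstacle here. The only point that needs care is justifying that $\mathcal{S}_\alpha(\StateVector^\star)$ actually enters every neighbourhood of $\StateVector^\star$ as $\alpha \to 0$. This is exactly where the hypotheses $\mathcal{S}_0 = \mathrm{id}$ near $\StateVector^\star$ and smoothness (continuity suffices) in $\alpha$ are used. The statement is deliberately stated on the whole interval $[t_s,\, t_s + T]$. The argument transfers verbatim to any subset $\mathcal{T}$ of observation times, since indistinguishability on the interval implies indistinguishability on every $\mathcal{T} \subseteq [t_s,\, t_s + T]$.
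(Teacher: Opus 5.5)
Your proof is correct and follows the same route as the paper's. Continuity of $\alpha \mapsto \mathcal{S}_\alpha(\StateVector^\star)$ at $\alpha = 0$ places nontrivial indistinguishable tuples in every neighbourhood of $\StateVector^\star$, which rules out local identifiability; you simply spell out, via the preimage of $\mathcal{V}$ under the curve, the neighbourhood argument that the paper compresses into ``by continuity.''
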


\begin{proof}
For every $\alpha \in (-\epsilon, \epsilon)$, the tuples $\StateVector^\star$ and $\mathcal{S}_\alpha(\StateVector^\star)$ are indistinguishable on $[t_s,\, t_s + T]$.
By continuity, $\mathcal{S}_\alpha(\StateVector^\star) \to \StateVector^\star$ as $\alpha \to 0$, so every neighbourhood of $\StateVector^\star$ contains some $\mathcal{S}_\alpha(\StateVector^\star) \neq \StateVector^\star$ producing the same output history.
Hence the parameter-to-output map is not locally injective at $\StateVector^\star$, and $\StateVector^\star$ is not locally identifiable.
\end{proof}

In typical aided navigation systems, measurements are available only at discrete times.
The following corollary gives the corresponding result for a finite measurement set.

\begin{corollary}[Discrete-measurement symmetry implies unidentifiability]
\label{cor:symmetry_unidentifiable_discrete}
Let $\StateVector^\star \in \mathcal{P}$, and let $t_1,\dots,t_n$ denote the measurement times.
Suppose there exists $\epsilon > 0$ and a smooth one-parameter family of transformations $\mathcal{S}_\alpha : \mathcal{P} \to \mathcal{P}$, $\alpha \in (-\epsilon, \epsilon)$, such that $\mathcal{S}_0(\StateVector) = \StateVector$ for all $\StateVector$ in a neighbourhood of $\StateVector^\star$.
Suppose further that, for every $\alpha \in (-\epsilon, \epsilon)$, the parameter tuples $\StateVector^\star$ and $\mathcal{S}_\alpha(\StateVector^\star)$ are indistinguishable on $\{t_1, \dots, t_n\}$.
If $\mathcal{S}_\alpha(\StateVector^\star) \neq \StateVector^\star$ for all $\alpha \in (-\epsilon, \epsilon) \setminus \{0\}$, then $\StateVector^\star$ is not locally identifiable from the discrete measurements.
\end{corollary}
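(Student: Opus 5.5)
The plan is to mirror the proof of \Cref{thm:symmetry_unidentifiable_continuous}, replacing the observation interval $[t_s,\, t_s + T]$ with the finite set $\mathcal{T} = \{t_1,\dots,t_n\}$.

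The definitions of indistinguishability and local identifiability are already stated for an arbitrary set of observation times $\mathcal{T} \subseteq [t_s,\, t_s+T]$. So once the measurement times are assumed to lie in such a window, the argument can be written directly for $\mathcal{T}$.

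First, fix an arbitrary neighbourhood $\mathcal{V}$ of $\StateVector^\star$. Since $\alpha \mapsto \mathcal{S}_\alpha(\StateVector^\star)$ is smooth, and hence continuous, with $\mathcal{S}_0(\StateVector^\star) = \StateVector^\star$, there is some $\delta \in (0,\epsilon)$ such that $\mathcal{S}_\alpha(\StateVector^\star) \in \mathcal{V}$ for all $|\alpha| < \delta$. Picking any $\alpha$ with $0 < |\alpha| < \delta$ gives a tuple $\StateVector = \mathcal{S}_\alpha(\StateVector^\star) \in \mathcal{V}$. By hypothesis $\StateVector \neq \StateVector^\star$, and $\StateVector$ is indistinguishable from $\StateVector^\star$ on $\{t_1,\dots,t_n\}$, meaning it yields identical outputs at every $t_i$.

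This contradicts the defining property of local identifiability on $\mathcal{T}$ for the neighbourhood $\mathcal{V}$. Because $\mathcal{V}$ was arbitrary, no neighbourhood can witness local identifiability, and $\StateVector^\star$ is not locally identifiable from the discrete measurements.

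An alternative is to observe that indistinguishability on the whole interval implies indistinguishability on any subset, so \Cref{thm:symmetry_unidentifiable_continuous} would transfer directly. However, the corollary's hypothesis is only the weaker discrete condition, so that reduction does not apply, and I would instead repeat the neighbourhood argument with $\mathcal{T}$ in place of the interval.

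There is no real obstacle. The only points needing care are that each $t_i$ lies in an admissible window with $t_i \ge \tau_u$, so that outputs for all admissible delays in the family are well defined, and that continuity of $\mathcal{S}_\alpha(\StateVector^\star)$ in $\alpha$ is taken in the topology of $\mathcal{P}$. Both follow from the standing assumptions.
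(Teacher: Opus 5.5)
Your proposal is correct and follows the paper's own proof. The paper simply reruns the continuity and neighbourhood argument of \Cref{thm:symmetry_unidentifiable_continuous} with the interval replaced by $\{t_1,\dots,t_n\}$, and your explicit treatment of an arbitrary neighbourhood $\mathcal{V}$ just spells out the one-line continuity step stated there.
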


\begin{proof}
The proof is identical to that of \Cref{thm:symmetry_unidentifiable_continuous}, with the output history replaced by the finite collection of outputs at $\{t_1, \dots, t_n\}$.
\end{proof}

The relevant symmetry arises when a time shift along the trajectory can be absorbed into a corresponding change in the initial condition.
Different delay--initial-condition pairs then produce the same measurement history, and the delay is not identifiable.
Below, we use two complementary tools:
\begin{enumerate}
\setlength{\itemsep}{2pt}
\item exact nonlinear symmetries of the measurement map, which imply true unidentifiability; and
\item infinitesimal symmetry directions, obtained from the nullspace of the Jacobian, which characterize local loss of information.
\end{enumerate}
The first proves directly that certain trajectories or measurement configurations are uninformative, while the second gives a standard local test, with a corresponding rank deficiency in the linearized measurement model.

\section{The Galilean Group $\LieGroupSGal{1}$}
\label{sec:SGal1_group}

We begin with the simplest Galilean group structure, $\LieGroupSGal{1}$, to make the symmetry argument easier to see.
In one spatial dimension, the only proper rotation is the identity.
Consequently, $\LieGroupSGal{1}$ consists only of Galilean boosts, spatial translations, and time translations. 
Elements of the group can be written as $3 \times 3$ matrices,
\begin{equation}
\label{eqn:SGal1_definition}
\LieGroupSGal{1} 
\Defined
\begin{Bmatrix}
\Matrix{X} =
\bbm 
1 & v & r \\ 
0 & 1 & \eta \\
0 & 0 & 1
\ebm
\in \Real^{3 \times 3}
\ \vast\vert \ 
v,\, r,\, \eta \in \Real
\end{Bmatrix},
\end{equation}
where $v \in \Real$ is the velocity (boost), $r \in \Real$ is the spatial translation, and $\eta \in \Real$ is the time translation.
The group operation is matrix multiplication.
The inverse of $\Matrix{X}$ is
\begin{equation*}
\Inv{\Matrix{X}} =
\bbm
1 & -v & v\<\eta - r \\ 
0 &  1 & -\eta \\
0 &  0 & 1
\ebm,
\end{equation*}
such that $\Matrix{X}\Inv{\Matrix{X}} = \Identity_{3}$.

A group element may be interpreted as a transformation between inertial frames related by constant relative motion, and acts on spacetime coordinates $(x, t)$ according to
\begin{equation*}
(x, t) \mapsto (x + v\<t + r,\, t + \eta).
\end{equation*}
Using homogeneous coordinates $\bbm x\! & t\! & 1 \ebm^\T$, this action can be written as
\begin{equation*}
\bbm
x \\ t \\ 1	
\ebm
\mapsto
\Matrix{X}
\bbm
x \\ t \\ 1
\ebm
=
\bbm
1 & v & r \\
0 & 1 & \eta \\
0 & 0 & 1
\ebm
\bbm
x \\ t \\ 1
\ebm.
\end{equation*}
For two group elements
\begin{equation*}
\Matrix{X}_1 =
\bbm
1 & v_1 & r_1 \\
0 & 1 & \eta_1 \\
0 & 0 & 1
\ebm,
\quad
\Matrix{X}_2 =
\bbm
1 & v_2 & r_2 \\
0 & 1 & \eta_2 \\
0 & 0 & 1
\ebm,
\end{equation*}
their composition is
\begin{equation*}
\Matrix{X}_1\Matrix{X}_2
=
\bbm
1 & v_1 + v_2 & r_1 + r_2 + v_1 \eta_2 \\
0 & 1 & \eta_1 + \eta_2 \\
0 & 0 & 1
\ebm.
\end{equation*}
The term $v_1 \eta_2$ reflects the coupling between boosts and time translations.

\subsection{The Lie Algebra $\LieAlgebraSGal{1}$}

The Lie algebra $\LieAlgebraSGal{1}$ consists of matrices of the form
\begin{equation}
\label{eqn:sgal1_algebra_definition}
\LieAlgebraSGal{1}
\Defined
\begin{Bmatrix}
\Matrix{\Xi} =
\bbm
0 & \nu & \rho \\
0 & 0   & \iota \\
0 & 0   & 0
\ebm
\in \Real^{3 \times 3}
\ \vast\vert \
\nu,\, \rho,\, \iota \in \Real
\end{Bmatrix}.
\end{equation}
The scalars $\nu$, $\rho$, and $\iota$ are the boost, spatial-translation, and time-translation components of a tangent vector at the identity.
A natural basis for $\LieAlgebraSGal{1}$ is given by
\begin{equation}
\label{eqn:sgal1_basis}
\Matrix{B} =
\bbm
0 & 1 & 0 \\
0 & 0 & 0 \\
0 & 0 & 0
\ebm,
\quad
\Matrix{P} =
\bbm
0 & 0 & 1 \\
0 & 0 & 0 \\
0 & 0 & 0
\ebm,
\quad
\Matrix{K} =
\bbm
0 & 0 & 0 \\
0 & 0 & 1 \\
0 & 0 & 0
\ebm.
\end{equation}
The Lie bracket is defined by the matrix commutator, and the only nontrivial relation is
\begin{equation*}
[\Matrix{B},\< \Matrix{K}] = \Matrix{P},
\quad
[\Matrix{B},\< \Matrix{P}] = \Matrix{0},
\quad
[\Matrix{K},\< \Matrix{P}] = \Matrix{0}.
\end{equation*}
We introduce the wedge operator $\Wedge{\cdot}$ as a mapping $\Real^{3} \rightarrow \LieAlgebraSGal{1}$,
\begin{equation}
\label{eqn:sgal1_wedge}
\Vector{\xi}^{\wedge}
=
\bbm
\rho \\
\nu \\
\iota
\ebm^{\wedge}
\Defined
\bbm
0 & \nu & \rho \\
0 & 0   & \iota \\
0 & 0   & 0
\ebm,
\end{equation}
where $\Vector{\xi} = \bbm \rho &\! \nu &\! \iota \ebm^{\T} \in \Real^3$.
The corresponding vee operator $\Vee{\cdot}$ is defined by
\begin{equation*}
\Vector{\xi}^{\wedge} = \Matrix{\Xi}
\quad \Longleftrightarrow \quad
\Matrix{\Xi}^{\vee} = \Vector{\xi}.
\end{equation*}

\subsection{The Exponential Map}
\label{subsec:SGal1_exp_log_maps}

Having defined the Lie algebra $\LieAlgebraSGal{1}$, we now give the exponential map from $\LieAlgebraSGal{1}$ to $\LieGroupSGal{1}$.
Let $\Vector{\xi} =\smash{\bbm \rho &\! \nu &\! \iota \ebm^{\raisebox{-0.15ex}{$\scriptstyle\T$}}} \in \Real^3$.
The exponential map is defined by the matrix exponential,
\begin{equation*}
\Matexp{\Vector{\xi}^{\wedge}}
=
\sum_{n=0}^{\infty}
\frac{1}{n!}
\left(\Vector{\xi}^{\wedge}\right)^n.
\end{equation*}
Since $\left(\Vector{\xi}^{\wedge}\right)^3 = \Matrix{0}$, the series truncates after the quadratic term, yielding
\begin{equation}
\label{eqn:SGal1_exp_closed}
\Matexp{\Vector{\xi}^{\wedge}}
=
\bbm
1 & \nu & \rho + \displaystyle\frac{1}{2}\nu \iota \\
0 & 1   & \iota \\
0 & 0   & 1
\ebm.
\end{equation}
The term $\frac{1}{2}\nu\iota$ arises from the noncommutativity of boosts and time translations, as reflected in the Lie bracket $[\Matrix{B},\< \Matrix{K}] = \Matrix{P}$.

\section{Back to Basics: An Analysis on $\LieGroupSGal{1}$}
\label{sec:back_to_basics}

We begin by examining a simplified one-dimensional model that captures the structure of the identifiability problem.
Our aim is to establish a common approach for analyzing identifiability under different measurement sequences.
We proceed in two stages.
First, we study the constraints induced by different measurement combinations.
The constraint-based analysis gives the excitation conditions a trajectory must satisfy for the delay and initial condition to be identifiable, while also characterizing which changes in the delay and initial condition leave the measurement history unchanged.
We then revisit the same cases using the standard Jacobian-based approach, determining local identifiability from the rank of the measurement Jacobian.

We write continuous-time quantities as functions of time, such as $\Matrix{X}(t)$, $r(t)$, and $v(t)$.
Subscripts denote evaluation at discrete time instants, so that $\Matrix{X}_k \Defined \Matrix{X}(t_k)$, $r_k \Defined r(t_k)$, and $v_k \Defined v(t_k)$, where $k$ is an index variable.

\subsection{A Simple Aided Inertial Navigation System}
\label{subsec:SGal1_system}

Consider a one-dimensional aided INS driven by the acceleration input $a(t)$, with gravity neglected to keep the model as simple as possible.
Without loss of generality, we take the initial time to be $t = 0$, consistent with the general delayed system in \Cref{eqn:general_delayed_system}.
We represent the navigation state on $\LieGroupSGal{1}$ in matrix form as
\begin{equation}
\label{eqn:SGal1_state}
\Matrix{X}(t)
=
\bbm
1 & v(t) & r(t) \\
0 &    1 &    t \\
0 &    0 &    1
\ebm,
\end{equation}
where $r(t) \in \Real$ is the position and $v(t) \in \Real$ is the velocity of the vehicle (navigation) frame relative to a fixed inertial reference frame.\footnote{To avoid clutter, we omit the frame identifiers throughout.}
Thus, $\Matrix{X}(t)$ is a time-parameterized curve on $\LieGroupSGal{1}$.
The initial condition is
\begin{equation}
\label{eqn:SGal1_initial_state}
\Matrix{X}_0
=
\bbm
1 & v_0 & r_0 \\
0 &   1 &   0 \\
0 &   0 &   1
\ebm
\in \LieGroupSGal{1},
\end{equation}
where $r_0 = r(0)$ and $v_0 = v(0)$ are unknown. Since the initial time is $t = 0$, $\Matrix{X}_0$ lies in the \emph{isochronous} subgroup of $\LieGroupSGal{1}$, whose elements have zero time component~\cite{2002_Bhand_Rigid,2023_Kelly_Galilean}.
That is, the navigation-frame and inertial-frame clocks share a common origin.

The continuous-time kinematics are
\begin{equation}
\label{eqn:1D_kinematics}
\dot{v}(t) = a(t),
\quad
\dot{r}(t) = v(t),
\end{equation}
where the input $a(t)$ is assumed known.
Define the integrated input quantities\footnote{We use $(\breve{\cdot})$ to indicate a quantity obtained by integrating the known input history.}
\begin{equation*}
\breve{v}(t)
\Defined
\int_{0}^{t} a(u)\< du,
\quad
\breve{r}(t)
\Defined
\int_{0}^{t} \breve{v}(u)\< du
=
\int_{0}^{t} (t - u)\< a(u)\< du.
\end{equation*}
The solution of \Cref{eqn:1D_kinematics} is then
\begin{equation*}
v(t) = v_0 + \breve{v}(t),
\quad
r(t) = r_0 + v_0\<t + \breve{r}(t).
\end{equation*}
Equivalently, the state trajectory may be written in group form as
\begin{equation*}
\Matrix{X}(t)
=
\Matrix{X}_0\<\Matrix{\Phi}(t),
\end{equation*}
where the state-transition matrix $\Matrix{\Phi}(t)$ is
\begin{equation}
\label{eqn:SGal1_phi_matrix}
\Matrix{\Phi}(t)
=
\bbm
1 & \breve{v}(t) & \breve{r}(t) \\
0 & 1 & t \\
0 & 0 & 1
\ebm
\in \LieGroupSGal{1}.
\end{equation}

Aiding measurements arrive at discrete times $t_k$, for $k = 1, \dots, n$.
A measurement received at time $t_k$ is of the navigation state at the earlier time $t_k - \tau$, where $\tau$ is the unknown constant delay.
Because time is a coordinate of the Galilean group element, time shifts can be expressed by the \emph{time-translation generator}
\begin{equation}
\label{eqn:sgal1_time_translation_generator}
\Vector{\xi}_d
\Defined
\bbm
\Zero_{1 \times 2} & 1
\ebm^\T
\in \Real^{3}.
\end{equation}
With our ordering of the Lie algebra coordinates, $\Vector{\xi}_d$ has only a time component.
Let
\begin{equation}
\label{eqn:SGal1_pure_time_translation}
\Matrix{T}(\beta)
\Defined
\Matexp{\beta\<\Vector{\xi}_d^{\wedge}}
\end{equation}
denote the corresponding time translation, where $\Vector{\xi}_d^{\wedge} = \Matrix{K}$ from \Cref{eqn:sgal1_basis}.
Under left multiplication, $\Matrix{T}(\beta)$ shifts the group time coordinate by $\beta$ while leaving the other components unchanged; this is the operation used below.

The most informative measurement is of the full delayed navigation state (i.e., position and velocity).
The position and velocity are recorded at the delayed time $t_k - \tau$, but the measurement does not arrive until time $t_k$.
A left time translation $\Matrix{T}(\tau)$ advances the group time coordinate from $t_k - \tau$ to $t_k$; we therefore define
\begin{equation}
\label{eqn:SGal1_delayed_measurement}
\Matrix{Y}_k
\Defined
\Matrix{T}(\tau)\<\Matrix{X}(t_k - \tau)
=
\Matrix{T}(\tau)\<\Matrix{X}_0\<\Matrix{\Phi}(t_k - \tau)
=
\bbm
1 & v(t_k - \tau) & r(t_k - \tau) \\
0 & 1 & t_k \\
0 & 0 & 1
\ebm,
\quad
k=1, \dots, n.
\end{equation}
The left translation acts on the measurement time only, so the initial condition $\Matrix{X}_0$ retains the zero time coordinate and remains in the isochronous subgroup.

For the linearized analysis later, we will need to differentiate only the position and velocity components of the measurement with respect to the unknowns.
We therefore define $\pi : \LieGroupSGal{1} \to \Real^2$ to extract the position and velocity entries of a group element,
\begin{equation}
\label{eqn:SGal1_measurement_extracted}
\Vector{y}_k
=
\pi(\Matrix{Y}_k)
\Defined
\bbm
r(t_k - \tau) \\
v(t_k - \tau)
\ebm.
\end{equation}
The unknown parameters are the initial condition $(r_0, v_0)$ and the delay $\tau$, while the functions $\breve{v}(\cdot)$ and $\breve{r}(\cdot)$ are fully determined by the known input history.
The model \Cref{eqn:SGal1_delayed_measurement} allows us to treat a range of measurement cases in a unified way.
In the next section, we consider delayed observations consisting of the full state and of position only.
The central question is always the same: given a finite number of measurements, can the unknown delay $\tau$ and the unknown initial condition $(r_0, v_0)$ be uniquely determined?

\subsection{Identifiability}
\label{subsec:SGal1_identifiability}

Next, we explore how the choice and number of measurements affect the identifiability of the delay and the initial condition.
We proceed incrementally, beginning with a single measurement and extending to multiple measurements over time.
Throughout, we assume that each measurement time $t_k$ is large enough, relative to the candidate delays under consideration, that the delayed evaluation time $t_k - \tau$ lies within the available trajectory history.

\subsubsection{A Single Full-State Measurement}
\label{subsubsec:SGal1_one_full_measurement}

With a single measurement of position and velocity (i.e., a `full-state' measurement in the one-dimensional setting) at a time $t_1$,
\begin{equation*}
\Matrix{Y}_1
=
\Matrix{T}(\tau)\<
\Matrix{X}_0\<
\Matrix{\Phi}(t_1 - \tau),
\end{equation*}
the unknown initial condition cannot be eliminated. In fact, for any
admissible alternative delay $\alttau$ define
\begin{equation*}
\bar{\Matrix{X}}_0
=
\Matrix{T}(\tau - \alttau)\<
\Matrix{X}_0\<
\Matrix{\Phi}(t_1 - \tau)\<
\Matrix{\Phi}(t_1 - \alttau)^{-1}
\end{equation*}
such that
\begin{equation*}
\Matrix{Y}_1
=
\Matrix{T}(\alttau)\<
\bar{\Matrix{X}}_0\<
\Matrix{\Phi}(t_1 - \alttau).
\end{equation*}
That is, the same observation may be explained by different values of $\tau$ together with corresponding changes in the initial condition.
The delay and initial condition are therefore jointly unidentifiable from a single full-state measurement.
In this case, the transformation of $(\Matrix{X}_0,\tau)$ above satisfies the conditions of \Cref{cor:symmetry_unidentifiable_discrete} and defines a continuous symmetry of the measurement model: with $\alpha = \alttau - \tau$, it is a smooth one-parameter family fixing the true parameters at $\alpha = 0$.

\subsubsection{Two Full-State Measurements}
\label{subsubsec:SGal1_two_full_measurements}

Given two full-state measurements at times $t_1$ and $t_2$,
\begin{equation*}
\Matrix{Y}_1
=
\Matrix{T}(\tau)\<
\Matrix{X}_0\<
\Matrix{\Phi}(t_1 - \tau),
\quad
\Matrix{Y}_2
=
\Matrix{T}(\tau)\<
\Matrix{X}_0\<
\Matrix{\Phi}(t_2 - \tau),
\end{equation*}
the unknown initial condition \emph{can} be eliminated. In particular,
\begin{equation*}
\Matrix{Y}_1^{-1}\Matrix{Y}_2
=
\Matrix{\Phi}(t_1 - \tau)^{-1}
\Matrix{\Phi}(t_2 - \tau).
\end{equation*}
That is, the measurements induce a constraint on the delay $\tau$ that is
independent of $\Matrix{X}_0$.
Accordingly, identifiability of the delay is governed by the injectivity
of the map
\begin{equation*}
\tau
\mapsto
\Matrix{\Phi}(t_1 - \tau)^{-1}
\Matrix{\Phi}(t_2 - \tau).
\end{equation*}
If this map is locally injective in a neighbourhood of the true delay, then two delayed full-state measurements are sufficient to identify the delay, and hence the initial condition.
Consequently, identifiability depends on the properties of the trajectory $\Matrix{\Phi}(t)$.

For $\LieGroupSGal{1}$, this condition can be expressed explicitly in terms of the kinematic variables.
Two delayed velocity measurements satisfy
\begin{equation*}
v_1 = v_0 + \breve{v}(t_1 - \tau),
\quad
v_2 = v_0 + \breve{v}(t_2 - \tau).
\end{equation*}
Subtracting eliminates $v_0$ to yield
\begin{equation*}
v_2 - v_1
=
\breve{v}(t_2 - \tau) - \breve{v}(t_1 - \tau).
\end{equation*}
Hence, the delay is constrained by the integrated input $\breve{v}(t)$.
One sufficient condition for local identifiability is that the scalar map
\begin{equation*}
\tau
\mapsto
\breve{v}(t_2 - \tau)-\breve{v}(t_1 - \tau)
\end{equation*}
is locally injective at the true delay.
A convenient local test is obtained by differentiation:
\begin{equation*}
\frac{d}{d\tau}
\big(\breve{v}(t_2 - \tau) - \breve{v}(t_1 - \tau)\big)
=
a(t_1 - \tau) - a(t_2 - \tau).
\end{equation*}
A nonzero derivative is sufficient for local injectivity.
Therefore, the delay is locally identifiable given two delayed measurements whenever
\begin{equation}
\label{eqn:SGal1_two_measurement_condition}
a(t_2 - \tau)\neq a(t_1 - \tau).
\end{equation}
The delay and the initial velocity are determined from the velocity measurements alone. Once these are known, a single position measurement is sufficient to recover the initial position.

\subsubsection{Two Position Measurements}
\label{subsubsec:SGal1_two_position_measurements}

What if only position measurements are available, as is often the case in practice? If two position measurements are available, then
\begin{equation*}
r_k = r\left(t_k - \tau\right)
=
r_0 + \left(t_k - \tau\right)\<v_0 + \breve{r}\left(t_k - \tau\right),
\quad 
k = 1, 2.
\end{equation*}
Subtracting eliminates $r_0$ but not $v_0$,
\begin{equation*}
r_2 - r_1
=
\left(t_2 - t_1\right)\<v_0 +
\breve{r}(t_2 - \tau) - \breve{r}(t_1 - \tau).
\end{equation*}
The result is a single scalar constraint in the two unknowns $v_0$ and $\tau$.
Since the coefficient of $v_0$ is $t_2 - t_1 \neq 0$, the constraint determines $v_0$ as a smooth function of $\tau$. %
Two delayed position-only measurements are therefore insufficient to uniquely determine the delay and the initial condition.

\subsubsection{Three or More Position Measurements}
\label{subsubsec:SGal1_three_position_measurements}

Consider three position measurements taken at times $t_1, t_2, t_3$, with
\begin{equation*}
r_k = r_0 + (t_k - \tau)\<v_0 + \breve{r}(t_k - \tau),
\quad 
k = 1, 2, 3.
\end{equation*}
Differencing eliminates $r_0$,
\begin{equation*}
r_j - r_i
=
(t_j - t_i)\<v_0 + \breve{r}(t_j - \tau) - \breve{r}(t_i - \tau),
\end{equation*}
and eliminating $v_0$ between any two equations yields a scalar constraint on $\tau$. Define
\begin{equation*}
\begin{split}
\psi(\tau)
\Defined
&~
(t_3 - t_2)\bigl(r_2 - r_1 - \breve{r}(t_2 - \tau) + \breve{r}(t_1 - \tau)\bigr) \\
&~
- (t_2 - t_1)\bigl(r_3 - r_2 - \breve{r}(t_3 - \tau) + \breve{r}(t_2 - \tau)\bigr).
\end{split}
\end{equation*}
The delay $\tau$ must satisfy $\psi(\tau) = 0$, and identifiability reduces to whether this equation has a locally unique solution.
A convenient local test is again obtained by differentiation. Using $\dot{\breve{r}}(t)=\breve{v}(t)$,
\begin{equation}
\label{eqn:SGal1_psi_condition}
\frac{d\<\psi\left(\tau\right)}{d\tau}
=
(t_3 - t_2)\bigl(\breve{v}(t_2 - \tau)-\breve{v}(t_1 - \tau)\bigr) -
(t_2 - t_1)\bigl(\breve{v}(t_3 - \tau)-\breve{v}(t_2 - \tau)\bigr).
\end{equation}
Local identifiability holds whenever $d\mspace{0.0mu}\psi/d\tau \neq 0$.
Therefore, three position measurements generically suffice to determine the delay under sufficiently exciting input histories.

The condition \Cref{eqn:SGal1_psi_condition} depends explicitly on the trajectory through the integrated input $\breve{v}(\cdot)$.
In particular, it requires that the increments of $\breve{v}(\cdot)$ over successive intervals not be proportional to the corresponding time differences.
Additional measurements provide independent constraints on $\tau$.
With more measurements, these constraints tend to enlarge the region over which identifiability holds.
However, identifiability still depends on the trajectory, and no finite number of measurements guarantees identifiability for all possible motions.

\subsection{Measurement Jacobians and Nullspace Structure}
\label{subsec:SGal1_jacobian_nullspace}

We now revisit identifiability of the $\LieGroupSGal{1}$ system from a complementary, linearization-based perspective, examining the Jacobian of the measurement function with respect to the unknown parameters.
Local identifiability is characterized by its rank and nullspace.
We follow the same sequence as in \Cref{subsec:SGal1_identifiability} (i.e., a single measurement, two measurements, etc.).

In this section, we use a parameter vector representation of the initial condition and delay (rather than the group-based representation).
The parameter vector is
\begin{equation}
\label{eqn:SGal1_parameter_vector}
\SmallStateVector
\Defined
\bbm
r_0\! & v_0\! & \tau
\ebm^{\T}.
\end{equation}
Recall that the delayed full-state measurement at time $t_k$ in the vector form of \Cref{eqn:SGal1_measurement_extracted} is
\begin{equation*}
\Vector{y}_k
=
\bbm
r_0 + (t_k - \tau)\< v_0 + \breve{r}(t_k - \tau) \\[0.5mm]
v_0 + \breve{v}(t_k - \tau)
\ebm.
\end{equation*}

\subsubsection{A Single Full-State Measurement}
\label{subsubsec:SGal1_one_full_measurement_jacobian}

Consider a single full-state measurement at time $t_1$.
The Jacobian of the measurement function with respect to $\SmallStateVector$ is\footnote{Superscripts in parentheses denote measured quantities (e.g., $(rv)$ for position and velocity, respectively), while subscripts indicate measurement times (e.g., $1{:}3$ for time indices $1$ through $3$ inclusive).}
\begin{equation}
\Matrix{H}^{(rv)}_1
=
\frac{\partial\<\Vector{y}_1}{\partial\SmallStateVector}
=
\bbm
1 & t_1 - \tau & -\bigl(v_0 + \breve{v}(t_1 - \tau)\bigr) \\
0 & 1 & -a(t_1 - \tau)
\ebm.
\end{equation}
Since $\Matrix{H}^{(rv)}_1\!\! \in \Real^{2 \times 3}$, it cannot have full column rank, and the parameters are not jointly identifiable.
To characterize the nullspace, consider a perturbation $\delta\SmallStateVector \Defined \smash{\bbm \delta r_0\! & \delta v_0\! & \delta\tau \ebm^{\T}}$ satisfying $\Matrix{H}^{(rv)}_1 \delta\SmallStateVector = \Vector{0}_{2 \times 1}$. Then
\begin{equation*}
\delta r_0 + (t_1 - \tau)\<\delta v_0
-
\bigl(v_0 + \breve{v}(t_1 - \tau)\bigr)\<\delta\tau
=
0,
\quad
\delta v_0 - a(t_1 - \tau)\<\delta\tau
=
0.
\end{equation*}
Hence,
\begin{equation*}
\delta v_0 = a(t_1 - \tau)\<\delta\tau,
\quad
\delta r_0
=
\bigl(v_0 + \breve{v}(t_1 - \tau) - 
(t_1 - \tau)\< a(t_1 - \tau)\bigr)\<\delta\tau,
\end{equation*}
so the nullspace is one-dimensional,
\begin{equation}
\label{eqn:SGal1_rv_nullspace}
\Ker{\Matrix{H}^{(rv)}_1}
=
\Span{
\bbm
v_0 + \breve{v}(t_1 - \tau) - (t_1 - \tau)\< a(t_1 - \tau) \\
a(t_1 - \tau) \\
1
\ebm
}.
\end{equation}
This direction corresponds to the symmetry described in \Cref{subsubsec:SGal1_one_full_measurement}: a change in delay can be compensated by a corresponding change in the initial condition, leaving the measurement unchanged to first order.

\subsubsection{Two Full-State Measurements}
\label{subsubsec:SGal1_two_full_measurements_jacobian}

For two full-state measurements at times $t_1$ and $t_2$, the stacked Jacobian is
\begin{equation*}
\Matrix{H}^{(rv)}_{1:2}
=
\bbm
\Matrix{H}^{(rv)}_1 \\[0.5mm]
\Matrix{H}^{(rv)}_2
\ebm,
\end{equation*}
which has full column rank whenever
\begin{equation*}
a(t_2 - \tau)\neq a(t_1 - \tau).
\end{equation*}
Therefore, two delayed full-state measurements are sufficient for local identifiability when the accelerations at the two delayed measurement times are unequal. 
This is the same conclusion reached by \Cref{eqn:SGal1_two_measurement_condition}.

\subsubsection{Two Position Measurements}
\label{subsubsec:SGal1_two_position_measurements_jacobian}

If only two position measurements are available, the stacked Jacobian is
\begin{equation}
\Matrix{H}^{(r)}_{1:2}
=
\bbm
1 & t_1 - \tau & -\bigl(v_0 + \breve{v}(t_1 - \tau)\bigr) \\[0.5mm]
1 & t_2 - \tau & -\bigl(v_0 + \breve{v}(t_2 - \tau)\bigr)
\ebm.
\end{equation}
This matrix cannot have full column rank; two delayed position measurements are therefore insufficient for local identifiability.
To characterize the nullspace, consider a perturbation $\delta\SmallStateVector$ satisfying $\Matrix{H}_{1:2}^{(r)}\,\delta\SmallStateVector = \Vector{0}_{2\times 1}$.
Subtracting the two rows yields
\begin{equation*}
(t_2 - t_1)\<\delta v_0
-
\bigl(\breve{v}(t_2 - \tau) - \breve{v}(t_1 - \tau)\bigr)\<\delta\tau
= 0,
\end{equation*}
and define
\begin{equation*}
\gamma
\Defined
\frac{\breve{v}(t_2 - \tau) - \breve{v}(t_1 - \tau)}{t_2 - t_1}.
\end{equation*}
Then
\begin{equation*}
\delta v_0 = \gamma\<\delta\tau,
\end{equation*}
and substitution into the first row gives
\begin{equation*}
\delta r_0
=
\bigl(v_0 + \breve{v}(t_1 - \tau) - (t_1 - \tau)\<\gamma\bigr)\<\delta\tau.
\end{equation*}
Hence, the nullspace is one-dimensional,
\begin{equation}
\Ker{\Matrix{H}^{(r)}_{1:2}}
=
\Span{
\bbm
v_0 + \breve{v}(t_1 - \tau) - (t_1 - \tau)\<\gamma \\
\gamma\\
1
\ebm
}.
\end{equation}
As in \Cref{eqn:SGal1_rv_nullspace}, this null direction is again an instance of the symmetry from \Cref{subsubsec:SGal1_two_position_measurements}.
Here, the required change in initial velocity is governed by the slope $\gamma$ rather than by the instantaneous acceleration at a single time. %
With only position measurements, the data constrain the average acceleration over the measurement interval, but do not uniquely determine when along the trajectory the measurements were taken.

\subsubsection{Three or More Position Measurements}
\label{subsubsec:SGal1_three_position_measurements_jacobian}

Consider three position measurements taken at times $t_1$, $t_2$, and $t_3$. The stacked Jacobian is
\begin{equation}
\Matrix{H}^{(r)}_{1:3}
=
\bbm
1 & t_1 - \tau & -\bigl(v_0 + \breve{v}(t_1 - \tau)\bigr) \\[0.5mm]
1 & t_2 - \tau & -\bigl(v_0 + \breve{v}(t_2 - \tau)\bigr) \\[0.5mm]
1 & t_3 - \tau & -\bigl(v_0 + \breve{v}(t_3 - \tau)\bigr)
\ebm.
\end{equation}
A sufficient condition for local identifiability is that the Jacobian has full rank.
Since $\Matrix{H}^{(r)}_{1:3} \in \Real^{3 \times 3}$, the rank may be checked by evaluating its determinant.
Subtracting the first row from the second and third rows gives
\begin{equation*}
\Determinant{\Matrix{H}^{(r)}_{1:3}}
=
\Determinant{
\bbm
1 & t_1 - \tau & -\bigl(v_0 + \breve{v}(t_1 - \tau)\bigr) \\[0.5mm]
0 & t_2 - t_1 & -\bigl(\breve{v}(t_2 - \tau) - \breve{v}(t_1 - \tau)\bigr) \\[0.5mm]
0 & t_3 - t_1 & -\bigl(\breve{v}(t_3 - \tau) - \breve{v}(t_1 - \tau)\bigr)
\ebm}.
\end{equation*}
Hence,
\begin{equation*}
\Determinant{\Matrix{H}_{1:3}^{(r)}}
=
(t_3 - t_1)\bigl(\breve{v}(t_2 - \tau) - \breve{v}(t_1 - \tau)\bigr)
-
(t_2 - t_1)\bigl(\breve{v}(t_3 - \tau) - \breve{v}(t_1 - \tau)\bigr).
\end{equation*}
Therefore, three delayed position measurements are locally sufficient for identifiability whenever
\begin{equation*}
(t_3 - t_1)\bigl(\breve{v}(t_2 - \tau) - \breve{v}(t_1 - \tau)\bigr)
\neq
(t_2 - t_1)\bigl(\breve{v}(t_3 - \tau) - \breve{v}(t_1 - \tau)\bigr).
\end{equation*}
Equivalently, local identifiability may fail when the average accelerations over the intervals $[t_1 - \tau,\, t_2 - \tau]$ and $[t_1 - \tau,\, t_3 - \tau]$ coincide.
Additional position measurements may be incorporated by stacking more rows, but three measurements are the minimum required for the position-only Jacobian to be full rank.

\subsection{Uninformative Trajectories}
\label{subsec:SGal1_bad_trajectories}

Identifiability of the delay (and the initial condition) depends not only on the number of measurements, but also on whether the input is sufficiently exciting.
There exist trajectories for which the delay and initial condition cannot be uniquely determined, regardless of the number or type of measurements.
We refer to these trajectories as \emph{uninformative}.%
\footnote{These trajectories are sometimes referred to as \emph{degenerate} in the literature. We use the term \emph{uninformative} to emphasize that the loss of identifiability arises from insufficient excitation of the input, rather than any defect in the trajectory itself.}

Our results below concern sufficiency and not necessity.
The stated conditions guarantee that the delay and the initial condition cannot be recovered.
We assume that the input is constant from the initial time $t = 0$ through the latest delayed measurement time.
In fact, the input need only be constant over an interval containing $t_k - \tau - \alpha$ for every delayed measurement time, where $\alpha$ is a small positive or negative shift of the delay.
Motion prior to the interval enters every delayed state through a single fixed factor that is absorbed into the initial condition.
We make the stronger assumption here because it avoids carrying that factor through the derivation.

Constant-acceleration motion provides a standard example of an uninformative trajectory, and is in fact the only case for which loss of identifiability holds for all measurement times.
Suppose $a(t) = a_0$ for all $t$, so that
\begin{equation}
\breve{v}(t) = a_0\<t,
\quad
\breve{r}(t) = \frac{1}{2}a_0\<t^2.
\end{equation}
In this case, the state-transition matrix $\Matrix{\Phi}(t)$ is generated by the exponential map of a fixed Lie algebra element.
Because $\Matrix{\Phi}(0)=\Identity_{3}$ and $\breve{v}(0)=\breve{r}(0)=0$, the generator is not arbitrary; it must match the zero initial conditions built into $\Matrix{\Phi}(t)$.
Specifically,
\begin{equation}
\Matrix{\Phi}(t)
=
\Matexp{t\<\Vector{\xi}^{\wedge}_{c}},
\quad
\Vector{\xi}^{\wedge}_{c}
=
\bbm
0 & a_0 & 0 \\
0 & 0   & 1 \\
0 & 0   & 0
\ebm.
\end{equation}
More generally, replacing $\Vector{\xi}^{\wedge}_{c}$ by a different constant Lie algebra element would only add a constant-velocity contribution that can be absorbed into the initial condition.
The restriction comes from the choice that $\Matrix{\Phi}(0) = \Identity_3$ and does not affect the symmetry argument below.
It follows that
\begin{equation}
\Matrix{\Phi}(t_k - \tau)\<
\Matrix{\Phi}(t_k - \alttau)^{-1}
=
\Matexp{(\alttau - \tau)\<
\Vector{\xi}^{\wedge}_{c}},
\end{equation}
which is independent of $t_k$.

Returning to the measurement model, the same transformation used in the single-measurement analysis therefore applies simultaneously to all measurements.
Define
\begin{equation*}
\bar{\Matrix{X}}_0
=
\Matrix{T}(\tau - \alttau)\<
\Matrix{X}_0\<
\Matrix{\Phi}(\alttau - \tau),
\end{equation*}
so that, for all measurement times $t_k$,
\begin{equation*}
\Matrix{Y}_k
=
\Matrix{T}(\tau)\<
\Matrix{X}_0\<
\Matrix{\Phi}(t_k - \tau)
=
\Matrix{T}(\alttau)\<
\bar{\Matrix{X}}_0\<
\Matrix{\Phi}(t_k - \alttau).
\end{equation*}
The entire measurement set can therefore be explained by a different value of $\tau$ together with a corresponding change in the initial condition.
To connect this with \Cref{subsec:symmetries}, recall that the unknown parameter vector is
\begin{equation*}
\SmallStateVector
\Defined
\bbm
r_0\! & v_0\! & \tau
\ebm^{\T}.
\end{equation*}
Assume that $\tau$ lies in the interior of the admissible delay interval, and choose $\epsilon>0$ sufficiently small that $\tau + \alpha$ and all shifted delayed evaluation times remain admissible for every $\alpha \in (-\epsilon, \epsilon)$.
Define a transformation $\mathcal{S}_\alpha$ on these parameters by\footnote{The Lie bracket relation $[\Matrix{B}, \Matrix{K}] = \Matrix{P}$ gives the structure of this symmetry: shifting the delay changes not only the velocity component but also the position component.}
\begin{equation}
\mathcal{S}_\alpha :
(r_0, v_0, \tau)
\mapsto
\left(
r_0 + v_0\<\alpha + \frac{1}{2}a_0\<\alpha^2,
v_0 + a_0\<\alpha,
\tau + \alpha
\right).
\end{equation}
Under constant-acceleration motion, the parameter tuples $\SmallStateVector$ and $\mathcal{S}_\alpha(\SmallStateVector)$ generate the same measurement history.
This family therefore satisfies the conditions of \Cref{cor:symmetry_unidentifiable_discrete} (and also \Cref{thm:symmetry_unidentifiable_continuous}), so the parameters are not locally identifiable.

The symmetry is also reflected in the Jacobian-based analysis as a loss of rank.
The rank conditions for both full-state and position-only measurements fail under constant acceleration, since $a(t)$ is identical at all delayed measurement times and $\breve{v}(t)$ is affine.
More generally, identifiability fails whenever the trajectory is locally indistinguishable from constant-acceleration motion at the delayed measurement times.

We revisit this example in \Cref{subsec:SGal3_bad_trajectories} when we extend the analysis of uninformative trajectories to $\LieGroupSGal{3}$.
The key point is that symmetry is exactly what leads to unidentifiability: when a shift in the delay can be absorbed into the initial condition, the measurements no longer uniquely determine the parameters.

\begin{infobox}[t!]
\begin{mdframed}
\textbf{What's Wrong With Recursive Filtering?}
\vspace{0.5\baselineskip}
\par\noindent To come full circle, we briefly return to the question of what unidentifiability implies for estimation.
A recursive estimator such as the EKF maintains uncertainty over the state and over any additional parameters, treating them as unknown.
Each measurement update adds the term $\Matrix{H}^\T\Matrix{R}^{-1}\Matrix{H}$ to the information matrix, which shrinks the filter covariance.
The Jacobian $\Matrix{H}$ must be evaluated somewhere, and the filter has no choice but to use the current estimate $\hat{\SmallStateVector}$ in place of the (unavailable) true value $\SmallStateVector^\star$.
When the system is unidentifiable, this is problematic.
For the $\LieGroupSGal{1}$ example, the information gained along the unobservable direction $\Vector{n}$, the nullspace vector of \Cref{eqn:SGal1_rv_nullspace}, is the scalar
\begin{equation}
\label{eqn:SGal1_info_gain}
\Vector{n}^\T
\Matrix{H}^{(rv)\T}_1
\Matrix{R}^{-1}
\Matrix{H}^{(rv)}_1
\Vector{n}
=
\bigl\lVert
\Matrix{R}^{-1/2}
\Matrix{H}^{(rv)}_1
\Vector{n}\bigr\rVert^2.
\end{equation}
At the true delay, $\Matrix{H}^{(rv)}_1\Vector{n} = \Vector{0}$ by construction, so the update gains no information along $\Vector{n}$.
Suppose instead the filter carries a delay estimate $\hat{\tau} \neq \tau$ while the kinematic estimates are exact.
Forming the Jacobian at $\hat{\tau}$ and cancelling the common terms gives
\begin{equation}
\label{eqn:SGal1_spurious_residual}
\Matrix{H}^{(rv)}_1(\hat{\tau})\<\Vector{n}
=
\bbm
\breve{v}(t_1 - \tau) - \breve{v}(t_1-\hat{\tau}) + a(t_1-\tau)\<
(\tau-\hat{\tau}) \\[2pt]
a(t_1 - \tau) - a(t_1 - \hat{\tau})
\ebm,
\end{equation}
which is generically nonzero for a time-varying input.
Taking $\hat{\tau} = 0$, for example, the velocity entry becomes $a(t_1 - \tau) - a(t_1)$.
Since the residual in \Cref{eqn:SGal1_spurious_residual} is nonzero, the information gain \Cref{eqn:SGal1_info_gain} is strictly positive.
The filter generically shrinks the covariance along $\Vector{n}$, a direction the measurement carries no information about.
Because the delay is constant (with no process noise added, typically), each update generically reduces the covariance along $\Vector{n}$, with no mechanism to undo it, and the estimator can become inconsistent.
This issue and its consequences are discussed by Huang in~\cite{2017_Huang_Towards}, for example.
\end{mdframed}
\vspace{-\baselineskip}
\end{infobox}

\section{The Galilean Group $\LieGroupSGal{3}$}
\label{sec:SGal3_group}

The special Galilean group $\LieGroupSGal{3}$ is a ten-dimensional Lie group.
We include just enough background on $\LieGroupSGal{3}$ for our purposes; more details are available in~\cite{2023_Kelly_Galilean} and~\cite{2025_Mahony_Galilean}.
The group describes transformations between inertial frames undergoing constant relative motion, and combines rotations, Galilean boosts, and spacetime translations.
Elements of $\LieGroupSGal{3}$ can be written as $5 \times 5$ matrices,%
\begin{equation}
\label{eqn:SGal3_definition}
\LieGroupSGal{3} \Defined
\begin{Bmatrix}
\Matrix{X} =
\bbm 
\Matrix{C} & \Vector{v} & \Vector{r} \\ 
\Zero & 1 & \eta \\
\Zero & 0 & 1
\ebm
\in \Real^{5 \times 5}
\ \vast\vert \ 
\Matrix{C} \in \LieGroupSO{3}, \,
\Vector{v} \in \Real^3,\,
\Vector{r} \in \Real^3,\,
\eta \in \Real
\end{Bmatrix}.
\end{equation}
Here, $\Matrix{C}$ is a rotation matrix, $\Vector{v} \in \Real^3$ is the Galilean boost, $\Vector{r} \in \Real^3$ is the spatial translation, and $\eta \in \Real$ is the time translation.
The group composition operation is matrix multiplication.
The inverse of $\Matrix{X}$ is
\begin{equation}
\label{eqn:SGal3_inverse}
\Inv{\Matrix{X}} =
\bbm
 \Transpose{\Matrix{C}} &
-\Transpose{\Matrix{C}}\Vector{v} &
-\Transpose{\Matrix{C}}\left(\Vector{r} - \Vector{v}\<\eta\right) \\ 
 \Zero & 1 & -\eta \\
 \Zero & 0 & 1
\ebm,
\end{equation}
such that $\Matrix{X}\Inv{\Matrix{X}} = \Identity_{5}$.

A group element may be interpreted as a transformation between inertial frames in constant relative motion, and acts on spacetime coordinates $(\Vector{p}, t)$, with $\Vector{p} \in \Real^3$, according to
\begin{equation*}
(\Vector{p}, t) \mapsto (\Matrix{C}\<\Vector{p} + \Vector{v}\<t + \Vector{r},\, t + \eta).
\end{equation*}
Using homogeneous coordinates $\bbm \Vector{p}^{\T}\! & t\! & 1 \ebm^\T$, this action can be written as
\begin{equation*}
\bbm
\Vector{p} \\ t \\ 1
\ebm
\mapsto
\Matrix{X}
\bbm
\Vector{p} \\ t \\ 1
\ebm
=
\bbm
\Matrix{C} & \Vector{v} & \Vector{r} \\
\Zero & 1 & \eta \\
\Zero & 0 & 1
\ebm
\bbm
\Vector{p} \\ t \\ 1
\ebm.
\end{equation*}
For two group elements
\begin{equation*}
\Matrix{X}_1 =
\bbm
\Matrix{C}_1 & \Vector{v}_1 & \Vector{r}_1 \\
\Zero & 1 & \eta_1 \\
\Zero & 0 & 1
\ebm,
\quad
\Matrix{X}_2 =
\bbm
\Matrix{C}_2 & \Vector{v}_2 & \Vector{r}_2 \\
\Zero & 1 & \eta_2 \\
\Zero & 0 & 1
\ebm,
\end{equation*}
their composition is
\begin{equation*}
\Matrix{X}_1\Matrix{X}_2
=
\bbm
\Matrix{C}_1\Matrix{C}_2 &
\Matrix{C}_1\Vector{v}_2 + \Vector{v}_1 &
\Matrix{C}_1\Vector{r}_2 + \Vector{v}_1\eta_2 + \Vector{r}_1 \\
\Zero & 1 & \eta_1 + \eta_2 \\
\Zero & 0 & 1
\ebm.
\end{equation*}
The term $\Vector{v}_1\eta_2$ reflects the coupling between boosts and time translations, while the terms $\Matrix{C}_1\Vector{v}_2$ and $\Matrix{C}_1\Vector{r}_2$ reflect the action of rotations on boosts and spatial translations.

\subsection{The Lie Algebra $\LieAlgebraSGal{3}$}

The Lie algebra $\LieAlgebraSGal{3}$ consists of matrices of the form
\begin{equation}
\label{eqn:sgal3_algebra_definition}
\LieAlgebraSGal{3}
\Defined
\begin{Bmatrix}
\Matrix{\Xi} =
\bbm
\Vector{\phi}^{\wedge} & \Vector{\nu} & \Vector{\rho} \\
\Zero & 0 & \iota \\
\Zero & 0 & 0
\ebm
\in \Real^{5 \times 5}
\ \vast\vert \
\Vector{\phi},\, \Vector{\nu},\, \Vector{\rho} \in \Real^3,\,
\iota \in \Real
\end{Bmatrix},
\end{equation}
where $\Vector{\phi} \in \Real^3$ parameterizes the rotational part, $\Vector{\nu} \in \Real^3$ the boost part, $\Vector{\rho} \in \Real^3$ the translational part, and $\iota \in \Real$ the temporal part.
We overload the wedge operator as a mapping $\Real^{10} \rightarrow \LieAlgebraSGal{3}$,
\begin{equation}
\label{eqn:SGal3_wedge}
\Vector{\xi}^{\wedge}
=
\bbm
\Vector{\rho} \\
\Vector{\nu} \\
\Vector{\phi} \\
\iota
\ebm^{\wedge}
=
\bbm
\Vector{\phi}^{\wedge} & \Vector{\nu} & \Vector{\rho} \\
\Zero & 0 & \iota \\
\Zero & 0 & 0
\ebm,
\end{equation}
where $\Vector{\xi} = \bbm \Vector{\rho}^{\T}\;\; \Vector{\nu}^{\T}\;\; \Vector{\phi}^{\T}\;\; \iota\ebm^{\T} \in \Real^{10}$.
The corresponding vee operator is defined by
\begin{equation*}
\Vector{\xi}^{\wedge} = \Matrix{\Xi}
\quad \Longleftrightarrow \quad
\Matrix{\Xi}^{\vee} = \Vector{\xi}.
\end{equation*}

\subsection{The Exponential Map}
\label{subsec:SGal3_exp_log_maps}

Let $\Vector{\xi} = \bbm \Vector{\rho}^{\T}\;\; \Vector{\nu}^{\T}\;\; \Vector{\phi}^{\T}\;\; \iota \ebm^{\T} \in \Real^{10}$.
The exponential map is defined by the matrix exponential,
\begin{equation*}
\Matexp{\Vector{\xi}^{\wedge}}
=
\sum_{n=0}^{\infty}
\frac{1}{n!}
\left(\Vector{\xi}^{\wedge}\right)^n.
\end{equation*}
In closed form for $\LieGroupSGal{3}$,
\begin{equation}
\label{eqn:SGal3_exp_closed}
\Matexp{\Vector{\xi}^{\wedge}}
=
\bbm
\Matrix{C}(\Vector{\phi}) &
\Matrix{D}(\Vector{\phi})\<\Vector{\nu} &
\Matrix{D}(\Vector{\phi})\<\Vector{\rho} + 
\Matrix{E}(\Vector{\phi})\<\Vector{\nu}\iota \\
\Zero & 1 & \iota \\
\Zero & 0 & 1
\ebm,
\end{equation}
where
\begin{align}
\label{eqn:SGal3_C_closed}
\Matrix{C}(\Vector{\phi})
&=
\Identity_{3}
+
\sin(\phi)\<\uw
+
\bigl(1-\cos(\phi)\bigr)\uw\uw, \\[2mm]
\label{eqn:SGal3_D_closed}
\Matrix{D}(\Vector{\phi})
&=
\Identity_{3}
+
\left(\frac{1-\cos(\phi)}{\phi}\right)\uw
+
\left(\frac{\phi-\sin(\phi)}{\phi}\right)\uw\uw, \\
\label{eqn:SGal3_E_closed}
\Matrix{E}(\Vector{\phi})
&=
\frac{1}{2}\Identity_{3}
+
\left(\frac{\phi-\sin(\phi)}{\phi^{2}}\right)\uw
+
\left(\frac{\phi^{2} + 2\cos(\phi)-2}{2\phi^{2}}\right)\uw\uw,
\end{align}
and $\Vector{\phi} = \phi\<\Vector{u}$, where $\phi = \Norm{\Vector{\phi}}$ and $\Vector{u}$ is a unit vector.

\section{Aided Navigation and Identifiability}
\label{sec:aided_identifiability}

We now consider aided inertial navigation in three-dimensional space.
The process model is driven by known IMU angular-rate and specific-force measurements, both resolved in the vehicle frame.
In addition to the unknown initial condition and delay, the model includes gyroscope and accelerometer biases.
For the delay-estimation analysis, we assume these biases are constant over the relevant time window.
Following our development for $\LieGroupSGal{1}$, we initially neglect gravity to keep the essential structure clear.%
\footnote{As we will see, the identifiability results carry over unchanged to the model that includes gravity.}
We then incorporate gravity, and show that its effect on the system's time evolution still fits naturally within the same symmetry framework.%
\footnote{The Earth's rotation is neglected, since its effect is negligible over the short time windows considered here.}

Relative to the one-dimensional case, the analysis is more involved for two reasons.
First, the state and parameter spaces are higher dimensional, and the orientation must also be estimated, introducing a constrained nonlinear state component.
Second, the unknown bias terms enter the kinematics and therefore become part of the identifiability problem.
Accordingly, we ask how many measurements, and of what type, are sufficient to determine the unknown initial condition, the delay, and the biases.

In this section, we use $\StateVector$ for the parameter tuple, including any group-valued elements, and $\SmallStateVector$ for its local-coordinate representation.
This distinction is important for the orientation component of the initial condition.
We follow the notation from \Cref{sec:back_to_basics} wherever possible, since most of the same ideas and conclusions carry over.
Any changes specific to the three-dimensional setting are introduced where needed.

\subsection{Aided Inertial Navigation on $\LieGroupSGal{3}$}
\label{subsec:SGal3_system}

The aided INS is driven by IMU measurements.
The state is represented on $\LieGroupSGal{3}$ as
\begin{equation}
\label{eqn:SGal3_state}
\Matrix{X}(t)
=
\bbm
\Matrix{C}(t) & \Vector{v}(t) & \Vector{r}(t) \\
\Zero & 1 & t \\
\Zero & 0 & 1
\ebm,
\end{equation}
where $\Matrix{C}(t) \in \LieGroupSO{3}$ is the orientation of the vehicle frame with respect to the fixed inertial frame, and $\Vector{v}(t), \Vector{r}(t) \in \Real^3$ are the velocity and position, respectively, expressed in the inertial frame.
The IMU measurements $\Vector{\omega}_m(t)$ and $\Vector{a}_m(t)$ are given in the vehicle frame.

The unknown parameters are
\begin{equation*}
\StateVector
\Defined
\left(
\Matrix{X}_0,\,
\tau,\,
\Vector{b}_{\omega},\,
\Vector{b}_a
\right),
\end{equation*}
where $\Matrix{X}_0 \in \LieGroupSGal{3}$ is the initial condition, $\tau \in \Real$ is the delay, and $\Vector{b}_{\omega},\Vector{b}_a \in \Real^3$ are the gyroscope and accelerometer biases, respectively.
Without loss of generality, we take the initial time to be zero, placing $\Matrix{X}_0$ in the isochronous subgroup of $\LieGroupSGal{3}$, and write
\begin{equation}
\label{eqn:SGal3_initial_state}
\Matrix{X}_0
=
\bbm
\Matrix{C}_0 & \Vector{v}_0 & \Vector{r}_0 \\
\Zero & 1 & 0 \\
\Zero & 0 & 1
\ebm
\in \LieGroupSGal{3},
\end{equation}
where $\Matrix{C}_0 = \Matrix{C}(0)$, $\Vector{v}_0 = \Vector{v}(0)$, and $\Vector{r}_0 = \Vector{r}(0)$ are unknown.

Following the presentation in \Cref{subsec:SGal1_system}, we develop the analysis initially in the \emph{gravity-free} setting; the gravity vector is added to the model in \Cref{subsec:SGal3_now_with_gravity}.
The continuous-time kinematics are
\begin{align}
\label{eqn:SGal3_kinematics_C}
\dot{\Matrix{C}}(t)
&=
\Matrix{C}(t)\bigl(\Vector{\omega}_m(t) - \Vector{b}_{\omega}\bigr)^{\wedge}, \\
\label{eqn:gravity_free_force}
\dot{\Vector{v}}(t)
&=
\Matrix{C}(t)\bigl(\Vector{a}_m(t) - \Vector{b}_a\bigr), \\
\label{eqn:SGal3_kinematics_r}
\dot{\Vector{r}}(t)
&=
\Vector{v}(t),
\end{align}
where $\dot{t} = 1$ by definition, and where $\Vector{\omega}_m(t),\, \Vector{a}_m(t) \in \Real^3$ are the measured vehicle-frame angular rates and specific forces, respectively.
We define the bias-corrected angular-rate and specific-force vectors,
\begin{equation}
\label{eqn:SGal3_bias_corrected_inputs}
\Vector{\omega}(t) \Defined \Vector{\omega}_m(t) - \Vector{b}_{\omega},
\quad
\Vector{s}(t) \Defined \Vector{a}_m(t) - \Vector{b}_a,
\end{equation}
so that \Cref{eqn:SGal3_kinematics_C,eqn:gravity_free_force} are $\dot{\Matrix{C}}(t) = \Matrix{C}(t)\<\Vector{\omega}(t)^{\wedge}$ and $\dot{\Vector{v}}(t) = \Matrix{C}(t)\<\Vector{s}(t)$.
Both $\Vector{\omega}(\cdot)$ and $\Vector{s}(\cdot)$ are then available directly from the inertial measurements, assuming the biases are known.

It is worth clarifying the form of \Cref{eqn:gravity_free_force}.
An accelerometer measures \emph{specific force}, the non-gravitational force per unit mass resolved in the vehicle frame, and not the \emph{coordinate acceleration} $\Vector{a}(t) \Defined \dot{\Vector{v}}(t) = \ddot{\Vector{r}}(t)$ of the vehicle with respect to the inertial frame.
In the presence of a uniform gravitational field, the two are related by $\Vector{a}(t) = \Matrix{C}(t)\<\Vector{s}(t) + \Vector{g}$, and an additive $\Vector{g}$ term would appear in \Cref{eqn:gravity_free_force}.
With $\Vector{g} = \Zero$, the measured specific force is just the coordinate acceleration resolved in the vehicle frame, $\Vector{s}(t) = \Matrix{C}(t)^\T\Vector{a}(t)$, and $\Vector{s}(\cdot)$ can be treated as a known model input.

It is convenient to express the trajectory on the group as
\begin{equation*}
\Matrix{X}(t) = \Matrix{X}_0\<\Matrix{\Phi}(t),
\end{equation*}
where $\Matrix{\Phi}(t) \in \LieGroupSGal{3}$ is the state-transition matrix,
\begin{equation}
\label{eqn:SGal3_phi_matrix}
\Matrix{\Phi}(t)
=
\bbm
\breve{\Matrix{C}}(t) & \breve{\Vector{v}}(t) & \breve{\Vector{r}}(t) \\
\Zero & 1 & t \\
\Zero & 0 & 1
\ebm,
\end{equation}
with $\breve{\Matrix{C}}(0)=\Identity_3$, $\breve{\Vector{v}}(0)=\Zero$, and $\breve{\Vector{r}}(0)=\Zero$.
The components satisfy
\begin{align}
\label{eqn:SGal3_C_breve_dot}
\dot{\breve{\Matrix{C}}}(t) 
& = 
\breve{\Matrix{C}}(t)\<\Vector{\omega}(t)^{\wedge}, \\
\label{eqn:SGal3_v_breve_dot}
\dot{\breve{\Vector{v}}}(t)
& = \breve{\Matrix{C}}(t)\<\Vector{s}(t), \\
\label{eqn:SGal3_r_breve_dot}
\dot{\breve{\Vector{r}}}(t) 
& = 
\breve{\Vector{v}}(t).
\end{align}

We now develop the measurement model, following \Cref{subsec:SGal1_system}.
The structure carries over directly, with the scalar kinematics replaced by their $\LieGroupSGal{3}$ counterparts.
The Lie algebra is now ten-dimensional, so the time translation direction is a vector in $\Real^{10}$ rather than $\Real^{3}$,
\begin{equation}
\label{eqn:sgal3_time_translation_generator}
\Vector{\xi}_d
\Defined
\bbm
\Zero_{1 \times 9} & 1
\ebm^\T
\in \Real^{10}.
\end{equation}
Let
\begin{equation}
\label{eqn:SGal3_pure_time_translation}
\Matrix{T}(\beta)
\Defined
\Matexp{\beta\<\Vector{\xi}_d^{\wedge}}
\end{equation}
denote the corresponding time translation, where the wedge maps $\Vector{\xi}_d$ to the $5 \times 5$ matrix representation.
Under left multiplication, $\Matrix{T}(\beta)$ shifts the group time coordinate by $\beta$ while leaving the orientation, velocity, and position components unchanged.
This is the operation used in the measurement construction below.

As in the one-dimensional case, the measured kinematics are the values at the delayed time $t_k - \tau$, but the measurement arrives at time $t_k$.
A left time translation $\Matrix{T}(\tau)$ advances the group time coordinate from $t_k - \tau$ to $t_k$.
Define
\begin{equation}
\label{eqn:SGal3_delayed_measurement}
\Matrix{Y}_k
\Defined
\Matrix{T}(\tau)\<
\Matrix{X}(t_k - \tau)
=
\Matrix{T}(\tau)\<
\Matrix{X}_0\<\Matrix{\Phi}(t_k - \tau)
=
\bbm
\Matrix{C}(t_k - \tau) & \Vector{v}(t_k - \tau) & \Vector{r}(t_k - \tau) \\
\Zero & 1 & t_k \\
\Zero & 0 & 1
\ebm,
\quad
k = 1, \dots, n.
\end{equation}
The initial condition $\Matrix{X}_0$ retains a zero time coordinate and remains in the isochronous subgroup.
Using the propagation equations above, the delayed orientation, velocity, and position are
\begin{align}
\Matrix{C}_k
&=
\Matrix{C}(t_k - \tau)
=
\Matrix{C}_0\<
\breve{\Matrix{C}}(t_k - \tau), \\
\Vector{v}_k
&=
\Vector{v}(t_k - \tau)
=
\Matrix{C}_0\<
\breve{\Vector{v}}(t_k - \tau) + \Vector{v}_0, \\
\Vector{r}_k
&=
\Vector{r}(t_k - \tau)
=
\Matrix{C}_0\<
\breve{\Vector{r}}(t_k - \tau)
+
(t_k - \tau)\<
\Vector{v}_0 
+
\Vector{r}_0.
\end{align}
The unknown parameters are the initial condition $\Matrix{X}_0$, whose components are $(\Matrix{C}_0, \Vector{v}_0, \Vector{r}_0)$ as given above, together with the delay $\tau$ and the constant biases $(\Vector{b}_{\omega}, \Vector{b}_a)$, while the functions $\breve{\Matrix{C}}(\cdot)$, $\breve{\Vector{v}}(\cdot)$, and $\breve{\Vector{r}}(\cdot)$ are determined by the measured IMU history together with the biases.

\subsection{Identifiability}
\label{subsec:SGal3_identifiability}

We now examine the identifiability of delayed aided navigation on $\LieGroupSGal{3}$.
As in the one-dimensional case, we start by studying the constraints induced by the measurements directly.
We consider three measurement types: full-state measurements (position, velocity, and orientation), pose measurements (position and orientation), and position-only measurements.

\subsubsection{A Single Full-State Measurement}
\label{subsubsec:SGal3_one_measurement}

With a single delayed full-state measurement at time $t_1$,
\begin{equation*}
\Matrix{Y}_1
=
\Matrix{T}(\tau)\<
\Matrix{X}(t_1 - \tau)
=
\Matrix{T}(\tau)\<
\Matrix{X}_0\<
\Matrix{\Phi}(t_1 - \tau),
\end{equation*}
the unknown initial condition again cannot be eliminated. As in the $\LieGroupSGal{1}$ case, for any admissible alternative delay $\alttau$ define
\begin{equation*}
\bar{\Matrix{X}}_0
=
\Matrix{T}(\tau - \alttau)\<
\Matrix{X}_0\<
\Matrix{\Phi}(t_1 - \tau)\<
\Matrix{\Phi}(t_1 - \alttau)^{-1},
\end{equation*}
such that
\begin{equation*}
\Matrix{Y}_1
=
\Matrix{T}(\alttau)\<
\bar{\Matrix{X}}_0\<
\Matrix{\Phi}(t_1 - \alttau).
\end{equation*}
The same observation can be explained by different values of $\tau$ together with corresponding changes in the initial condition.
The delay and initial condition are therefore jointly unidentifiable from a single full-state measurement.
As before, the transformation of $(\Matrix{X}_0, \tau)$ defines a continuous symmetry satisfying \Cref{cor:symmetry_unidentifiable_discrete}.
The biases enter only through the integrated quantities $\smash{\breve{\Matrix{C}}}$, $\smash{\breve{\Vector{v}}}$, and $\smash{\breve{\Vector{r}}}$, which a single measurement cannot separate from the initial condition, so they are also unidentifiable.

\subsubsection{Three Pose Measurements}
\label{subsubsec:SGal3_three_pose_measurements}

While two full-state measurements are generically sufficient, that case is more tedious to work through algebraically.
We instead move directly to the case of three delayed pose measurements (i.e., position and orientation), which is more useful in practice and easier to treat.
Let
\begin{equation*}
\Matrix{C}_k = \Matrix{C}(t_k - \tau),
\quad
\Vector{r}_k = \Vector{r}(t_k - \tau),
\quad
k = 1, 2, 3.
\end{equation*}
Using the propagation equations above, we establish identifiability by analyzing the measurement constraints in sequence.

\paragraph{Step 1: Recover $(\tau, \Vector{b}_{\omega})$ from Relative Orientations.}
From the orientation equation,
\begin{equation*}
\Matrix{C}_k
=
\Matrix{C}_0\<\breve{\Matrix{C}}(t_k - \tau),
\end{equation*}
so for any pair $(i,j)$,
\begin{equation*}
\Transpose{\Matrix{C}}_i\<\Matrix{C}_j
=
\Transpose{\breve{\Matrix{C}}(t_i - \tau)}
\breve{\Matrix{C}}(t_j - \tau),
\end{equation*}
and the initial orientation $\Matrix{C}_0$ is eliminated.
Using the pairs $(1, 2)$ and $(2, 3)$ gives two independent relative orientation constraints, giving six scalar equations in the four unknowns $(\tau, \Vector{b}_{\omega})$.
Under sufficiently informative motion, these determine $\tau$ and $\Vector{b}_{\omega}$ locally.

\paragraph{Step 2: Recover the Initial Orientation.}
Once $\tau$ and $\Vector{b}_{\omega}$ are known, the propagated rotation $\breve{\Matrix{C}}(t_k - \tau)$ is known, and the initial orientation is recovered from any single measurement:
\begin{equation*}
\Matrix{C}_0
=
\Matrix{C}_1\breve{\Matrix{C}}(t_1 - \tau)^{-1}.
\end{equation*}

\paragraph{Step 3: Recover $(\Vector{v}_0, \Vector{b}_a)$ from Position Differences.}
From the position equation,
\begin{equation*}
\Vector{r}_k
=
\Matrix{C}_0\<\breve{\Vector{r}}(t_k - \tau)
+
(t_k - \tau)\<\Vector{v}_0
+
\Vector{r}_0.
\end{equation*}
Subtracting two such equations eliminates $\Vector{r}_0$:
\begin{equation}
\label{eqn:SGal3_pose_difference}
\Vector{r}_j - \Vector{r}_i
=
\Matrix{C}_0
\bigl(
\breve{\Vector{r}}(t_j - \tau) - \breve{\Vector{r}}(t_i - \tau)
\bigr)
+
(t_j - t_i)\<\Vector{v}_0.
\end{equation}
Once $\tau$, $\Vector{b}_{\omega}$, and $\Matrix{C}_0$ are known, the remaining unknowns in \Cref{eqn:SGal3_pose_difference} are $\Vector{v}_0$ and $\Vector{b}_a$, since the propagated quantity $\breve{\Vector{r}}(\cdot)$ depends on $\Vector{b}_a$.
Using the pairs $(1, 2)$ and $(2, 3)$ gives six scalar equations in the six unknowns $(\Vector{v}_0, \Vector{b}_a)$.
Under generic motion, these determine $\Vector{v}_0$ and $\Vector{b}_a$ locally.

\paragraph{Step 4: Recover the Initial Position.}
Once $\tau$, $\Vector{b}_{\omega}$, $\Matrix{C}_0$, $\Vector{v}_0$, and $\Vector{b}_a$ are known, the initial position follows from any one position measurement:
\begin{equation*}
\Vector{r}_0
=
\Vector{r}_1
-
\Matrix{C}_0\<\breve{\Vector{r}}(t_1 - \tau)
-
(t_1 - \tau)\<\Vector{v}_0.
\end{equation*}
Three pose measurements are therefore generically sufficient for local identifiability.

\subsubsection{Six or More Position Measurements}
\label{subsubsec:SGal3_six_position_measurements}

Finally, we consider the case in which only delayed position measurements are available:
\begin{equation*}
\Vector{r}_k = \Vector{r}(t_k-\tau),
\quad
k = 1, \dots, n.
\end{equation*}
From the position model,
\begin{equation*}
\Vector{r}_k
=
\Matrix{C}_0\<\breve{\Vector{r}}(t_k - \tau)
+
(t_k - \tau)\<\Vector{v}_0
+
\Vector{r}_0,
\end{equation*}
differencing eliminates $\Vector{r}_0$:
\begin{equation*}
\Vector{r}_j - \Vector{r}_i
=
\Matrix{C}_0
\bigl(
\breve{\Vector{r}}(t_j - \tau)-\breve{\Vector{r}}(t_i - \tau)
\bigr)
+
(t_j - t_i)\<\Vector{v}_0.
\end{equation*}
After eliminating $\Vector{r}_0$, the remaining unknowns are $(\Matrix{C}_0, \Vector{v}_0, \tau, \Vector{b}_{\omega}, \Vector{b}_a)$, which have total dimension $13$.
Each independent position difference contributes three scalar constraints.
Hence, with $n$ position measurements, at most $3\<(n - 1)$ independent scalar constraints can be formed.
A necessary condition for local identifiability is therefore $3\<(n - 1) \ge 13$, which implies $n \ge 6$.
In generic terms, six position measurements are the first case in which local identifiability is possible.
We do not work through the full elimination argument here.

\subsection{Measurement Jacobians and Nullspace Structure}
\label{subsec:SGal3_jacobian_nullspace}

We now revisit the $\LieGroupSGal{3}$ identifiability analysis using the Jacobian of the measurement function with respect to the unknown parameters.
As in the $\LieGroupSGal{1}$ case, the goal is to characterize local identifiability through the rank and nullspace of the Jacobian.
Using local coordinates for $\LieGroupSO{3}$, we parameterize perturbations by
\begin{equation}
\delta\SmallStateVector
\Defined
\bbm
\delta\Vector{r}_0^\T &
\delta\Vector{v}_0^\T &
\delta\boldsymbol{\phi}_0^\T &
\delta\tau &
\delta\Vector{b}_{\omega}^\T &
\delta\Vector{b}_a^\T
\ebm^\T
\in \Real^{16},
\end{equation}
where $\delta\boldsymbol{\phi}_0 \in \Real^3$ is the local perturbation of the initial orientation.
We use a right-multiplicative orientation perturbation,
\begin{equation*}
\bar{\Matrix{C}}_0
=
\Matrix{C}_0\<\Matexp{\delta\boldsymbol{\phi}_0^\wedge},
\end{equation*}
and define the nominal and perturbed delayed evaluation times as
\begin{equation*}
t_{d,k} \Defined t_k - \tau,
\qquad
\bar{t}_{d,k} \Defined t_k - \alttau.
\end{equation*}
The induced perturbation in the measured orientation is
\begin{equation*}
\delta\boldsymbol{\vartheta}_k
=
\Matlog{
\Matrix{C}(t_{d,k})^\T
\bar{\Matrix{C}}(\bar{t}_{d,k})
}^\vee,
\end{equation*}
where $\bar{\Matrix{C}}(t)$ is propagated from $\bar{\Matrix{C}}_0$ under the perturbed gyroscope bias.
Making use of the propagated quantities introduced earlier:
\begin{align*}
\Matrix{C}(t)
&=
\Matrix{C}_0\<\breve{\Matrix{C}}(t), \\
\Vector{v}(t)
&=
\Matrix{C}_0\<\breve{\Vector{v}}(t) + \Vector{v}_0, \\
\Vector{r}(t)
&=
\Matrix{C}_0\<\breve{\Vector{r}}(t)+\Vector{v}_0\<t + \Vector{r}_0,
\end{align*}
where $\breve{\Matrix{C}}(t)$, $\breve{\Vector{v}}(t)$, and $\breve{\Vector{r}}(t)$ are determined by the bias-corrected IMU history.
The full-state measurement returns the position, velocity, and orientation components of $\Matrix{Y}_k$, extracted by the map $\pi : \LieGroupSGal{3} \to \Real^3 \times \Real^3 \times \LieGroupSO{3}$,
\begin{equation*}
\pi(\Matrix{Y}_k)
\Defined
\bigl(
\Vector{r}(t_{d,k}),\,
\Vector{v}(t_{d,k}),\,
\Matrix{C}(t_{d,k})
\bigr).
\end{equation*}
The position and velocity are vector-valued, while the orientation lies on $\LieGroupSO{3}$. %
Linearizing $\pi(\Matrix{Y}_k)$ about the operating point gives the measurement perturbation
\begin{equation}
\delta \Vector{y}_k
\Defined
\bbm
\delta\Vector{r}_k^\T &
\delta\Vector{v}_k^\T &
\delta\boldsymbol{\vartheta}_k^\T
\ebm^\T
\in \Real^9,
\end{equation}
where $\delta\Vector{r}_k$ and $\delta\Vector{v}_k$ are the position and velocity perturbations.

\subsubsection{A Single Full-State Measurement}
\label{subsubsec:SGal3_one_full_measurement_jacobian}

Consider a single delayed full-state measurement at time $t_1$, consisting of position, velocity, and orientation.
The linearized measurement equation is
\begin{equation}
\delta \Vector{y}_1
\approx
\Matrix{H}_1^{(rv\phi)}\<\delta\SmallStateVector,
\end{equation}
with Jacobian
\begin{equation}
\label{eqn:single_full_jacobian}
\Matrix{H}_1^{(rv\phi)}
=
\bbm
\Identity_3
&
t_{d,1}\Identity_3
&
-\Matrix{C}_0\<\breve{\Vector{r}}(t_{d,1})^\wedge
&
-\Vector{v}(t_{d,1})
&
\Matrix{C}_0\<\Matrix{\Psi}^{(r)}_{\omega}(t_{d,1})
&
\Matrix{C}_0\<\Matrix{\Psi}^{(r)}_{a}(t_{d,1})
\\[0.5mm]
\Zero
&
\Identity_3
&
-\Matrix{C}_0\<\breve{\Vector{v}}(t_{d,1})^\wedge
&
-\Vector{a}(t_{d,1})
&
\Matrix{C}_0\<\Matrix{\Psi}^{(v)}_{\omega}(t_{d,1})
&
\Matrix{C}_0\<\Matrix{\Psi}^{(v)}_{a}(t_{d,1})
\\[0.5mm]
\Zero
&
\Zero
&
\breve{\Matrix{C}}(t_{d,1})^\T
&
-\Vector{\omega}(t_{d,1})
&
\Matrix{\Psi}^{(\phi)}_{\omega}(t_{d,1})
&
\Zero
\ebm.
\end{equation}
In the velocity row, the delay entry $-\Vector{a}(t_{d,1})$ is the coordinate acceleration of the vehicle in the inertial reference frame, $\Vector{a}(t) = \Matrix{C}(t)\<\Vector{s}(t)$, where $\Vector{s}(t)$ is the bias-corrected specific force from \Cref{eqn:SGal3_bias_corrected_inputs}.
The orientation sensitivity with respect to the gyroscope bias is defined through the local perturbation
\begin{equation*}
\delta\boldsymbol{\phi}_{\omega}(t)
\Defined
\Matlog{\breve{\Matrix{C}}(t)^{-1}\breve{\Matrix{C}}_{\omega}(t)}^\vee,
\end{equation*}
where $\breve{\Matrix{C}}_{\omega}(t)$ denotes the propagated orientation under a perturbed bias, to first order in $\delta\Vector{b}_{\omega}$.
The bias sensitivities are
\begin{align*}
\Matrix{\Psi}^{(r)}_{\omega}(t)
&\Defined
\frac{\partial\<\breve{\Vector{r}}(t)}{\partial\<\Vector{b}_{\omega}},
\qquad
\Matrix{\Psi}^{(v)}_{\omega}(t)
\Defined
\frac{\partial\<\breve{\Vector{v}}(t)}{\partial\<\Vector{b}_{\omega}},
\qquad
\Matrix{\Psi}^{(\phi)}_{\omega}(t)
\Defined
\frac{\partial\<\delta\boldsymbol{\phi}_{\omega}(t)}{\partial\<\Vector{b}_{\omega}}, \\
\Matrix{\Psi}^{(r)}_{a}(t)
&\Defined
\frac{\partial\<\breve{\Vector{r}}(t)}{\partial\<\Vector{b}_a},
\qquad
\Matrix{\Psi}^{(v)}_{a}(t)
\Defined
\frac{\partial\<\breve{\Vector{v}}(t)}{\partial\<\Vector{b}_a}.
\end{align*}
The matrices above depend on the input history and the propagated trajectory, and are not available in closed form in general.

To characterize the corresponding nullspace, set $\Matrix{H}_1^{(rv\phi)}\<\delta\SmallStateVector = \Vector{0}_{9\times 1}$.
We solve the three block rows from bottom to top, since the orientation row involves the fewest unknowns.
The three block rows give
\begin{align}
\delta\Vector{r}_0
+
t_{d,1}\<\delta\Vector{v}_0
-
\Matrix{C}_0\<\breve{\Vector{r}}(t_{d,1})^\wedge
\delta\boldsymbol{\phi}_0
-
\Vector{v}(t_{d,1})\<\delta\tau
+
\Matrix{C}_0\<\Matrix{\Psi}^{(r)}_{\omega}(t_{d,1})\<\delta\Vector{b}_{\omega}
+
\Matrix{C}_0\<\Matrix{\Psi}^{(r)}_{a}(t_{d,1})\<\delta\Vector{b}_a
& =
\Vector{0}_{3\times 1},
\label{eqn:SGal3_null_row1} \\
\delta\Vector{v}_0
-
\Matrix{C}_0\<\breve{\Vector{v}}(t_{d,1})^\wedge \delta\boldsymbol{\phi}_0
-
\Vector{a}(t_{d,1})\<\delta\tau
+
\Matrix{C}_0\<\Matrix{\Psi}^{(v)}_{\omega}(t_{d,1})\<\delta\Vector{b}_{\omega}
+
\Matrix{C}_0\<\Matrix{\Psi}^{(v)}_{a}(t_{d,1})\<\delta\Vector{b}_a
& =
\Vector{0}_{3\times 1},
\label{eqn:SGal3_null_row2}
\\
\breve{\Matrix{C}}(t_{d,1})^\T\<\delta\boldsymbol{\phi}_0
-
\Vector{\omega}(t_{d,1})\<\delta\tau
+
\Matrix{\Psi}^{(\phi)}_{\omega}(t_{d,1})\<\delta\Vector{b}_{\omega}
& =
\Vector{0}_{3\times 1}.
\label{eqn:SGal3_null_row3}
\end{align}
We isolate the delay direction by setting the bias perturbations to zero, $\delta\Vector{b}_{\omega} = \delta\Vector{b}_a = \Zero$.
The orientation row \Cref{eqn:SGal3_null_row3} then gives the initial-orientation perturbation directly,
\begin{equation}
\delta\boldsymbol{\phi}_0
=
\breve{\Matrix{C}}(t_{d,1})\<\Vector{\omega}(t_{d,1})\<\delta\tau
=
\Vector{\omega}_0(t_{d,1})\<\delta\tau,
\end{equation}
where $\Vector{\omega}_0(t) \Defined \breve{\Matrix{C}}(t)\<\Vector{\omega}(t)$ is the angular velocity expressed in the initial-vehicle frame.
With $\delta\boldsymbol{\phi}_0$ known, the velocity row \Cref{eqn:SGal3_null_row2} gives the initial-velocity perturbation,
\begin{equation}
\delta\Vector{v}_0
=
\bigl(
\Matrix{C}_0\<
\breve{\Vector{v}}(t_{d,1})^\wedge\Vector{\omega}_0(t_{d,1})
+
\Vector{a}(t_{d,1})
\bigr)\<\delta\tau,
\end{equation}
and finally the position row \Cref{eqn:SGal3_null_row1} gives the initial-position perturbation,
\begin{equation}
\delta\Vector{r}_0
=
\Big(
\Vector{v}(t_{d,1})
-
t_{d,1}
\big(
\Matrix{C}_0\<
\breve{\Vector{v}}(t_{d,1})^\wedge \Vector{\omega}_0(t_{d,1})
+
\Vector{a}(t_{d,1})
\big)
+
\Matrix{C}_0\<
\breve{\Vector{r}}(t_{d,1})^\wedge \Vector{\omega}_0(t_{d,1})
\Big)\<\delta\tau.
\end{equation}
Hence, the nullspace contains the one-dimensional direction
\begin{equation}
\Ker{\Matrix{H}_1^{(rv\phi)}}
\supseteq
\Span{
\bbm
\Vector{v}(t_{d,1})
-
t_{d,1}\big(
\Matrix{C}_0\<
\breve{\Vector{v}}(t_{d,1})^\wedge\Vector{\omega}_0(t_{d,1})
+
\Vector{a}(t_{d,1})
\big)
+
\Matrix{C}_0\<
\breve{\Vector{r}}(t_{d,1})^\wedge\Vector{\omega}_0(t_{d,1})
\\[0.5mm]
\Matrix{C}_0\<
\breve{\Vector{v}}(t_{d,1})^\wedge\Vector{\omega}_0(t_{d,1})
+
\Vector{a}(t_{d,1})
\\[0.5mm]
\Vector{\omega}_0(t_{d,1})
\\[0.5mm]
1
\\[0.5mm]
\Zero_{3\times 1}
\\[0.5mm]
\Zero_{3\times 1}
\ebm
}.
\end{equation}
This is the direct analogue of the $\LieGroupSGal{1}$ case: a change in the delay can be compensated by corresponding changes in the initial condition, leaving the measurement unchanged to first order.

\subsubsection{Two Full-State Measurements}
\label{subsubsec:SGal3_two_full_measurements_jacobian}

We now consider the case of two full-state measurements at times $t_1$ and $t_2$.
The stacked Jacobian is
\begin{equation}
\label{eqn:SGal3_two_full_stacked}
\delta \Vector{y}_{1:2}
\approx
\Matrix{H}_{1:2}^{(rv\phi)}\<
\delta\SmallStateVector,
\quad
\Matrix{H}_{1:2}^{(rv\phi)}
=
\bbm
\Matrix{H}_1^{(rv\phi)} \\[0.5mm]
\Matrix{H}_2^{(rv\phi)}
\ebm
\in \Real^{18\times 16},
\end{equation}
where each block $\Matrix{H}_k^{(rv\phi)} \in \Real^{9\times 16}$ has the form
\begin{equation}
\label{eqn:SGal3_two_full_single_block}
\Matrix{H}_k^{(rv\phi)}
=
\bbm
\Identity_3
&
t_{d,k}\Identity_3
&
-\Matrix{C}_0\<\breve{\Vector{r}}(t_{d,k})^\wedge
&
-\Vector{v}(t_{d,k})
&
\Matrix{C}_0\<\Matrix{\Psi}^{(r)}_{\omega}(t_{d,k})
&
\Matrix{C}_0\<\Matrix{\Psi}^{(r)}_{a}(t_{d,k})
\\[0.5mm]
\Zero
&
\Identity_3
&
-\Matrix{C}_0\<\breve{\Vector{v}}(t_{d,k})^\wedge
&
-\Vector{a}(t_{d,k})
&
\Matrix{C}_0\<\Matrix{\Psi}^{(v)}_{\omega}(t_{d,k})
&
\Matrix{C}_0\<\Matrix{\Psi}^{(v)}_{a}(t_{d,k})
\\[0.5mm]
\Zero
&
\Zero
&
\breve{\Matrix{C}}(t_{d,k})^\T
&
-\Vector{\omega}(t_{d,k})
&
\Matrix{\Psi}^{(\phi)}_{\omega}(t_{d,k})
&
\Zero
\ebm,
\quad
k = 1,2.
\end{equation}

This is the Jacobian counterpart of the direct algebraic argument: the two measurements jointly constrain the initial condition, the delay, and the IMU biases.
Unlike the single-measurement case, there are now $18$ scalar constraints for $16$ unknowns, so generic local identifiability is possible.
In particular, if
\begin{equation}
\Rank{\Matrix{H}^{(rv\phi)}_{1:2}} = 16,
\end{equation}
then the initial condition, the delay, and the constant IMU biases are locally identifiable.
Thus, two full-state measurements are generically sufficient for local identifiability in the $\LieGroupSGal{3}$ aided navigation problem.

\subsubsection{Two Pose Measurements}
\label{subsubsec:SGal3_two_pose_measurements_jacobian}

The two-pose case clarifies how the delay-related nullspace direction is constrained by multiple measurements.
The stacked Jacobian is
\begin{equation}
\label{eqn:SGal3_two_pose_stacked}
\delta\Vector{y}_{1:2}
\approx
\Matrix{H}_{1:2}^{(r\phi)}\<\delta\SmallStateVector,
\quad
\Matrix{H}_{1:2}^{(r\phi)}
=
\bbm
\Matrix{H}_1^{(r\phi)} \\[0.5mm]
\Matrix{H}_2^{(r\phi)}
\ebm
\in \Real^{12 \times 16},
\end{equation}
where each block $\Matrix{H}_k^{(r\phi)} \in \Real^{6\times 16}$ has the form
\begin{equation}
\label{eqn:SGal3_two_pose_single_block}
\Matrix{H}_k^{(r\phi)}
=
\bbm
\Identity_3
&
t_{d,k}\Identity_3
&
-\Matrix{C}_0\<\breve{\Vector{r}}(t_{d,k})^\wedge
&
-\Vector{v}(t_{d,k})
&
\Matrix{C}_0\<\Matrix{\Psi}^{(r)}_{\omega}(t_{d,k})
&
\Matrix{C}_0\<\Matrix{\Psi}^{(r)}_{a}(t_{d,k})
\\[0.5mm]
\Zero
&
\Zero
&
\breve{\Matrix{C}}(t_{d,k})^\T
&
-\Vector{\omega}(t_{d,k})
&
\Matrix{\Psi}^{(\phi)}_{\omega}(t_{d,k})
&
\Zero
\ebm,
\quad
k = 1, 2.
\end{equation}
To characterize the corresponding nullspace, set $\Matrix{H}_{1:2}^{(r\phi)}\<\delta\SmallStateVector = \Vector{0}_{12\times 1}$.
The two block rows give, for $k=1,2$,
\begin{align}
\delta\Vector{r}_0
+
t_{d,k}\<\delta\Vector{v}_0
-
\Matrix{C}_0\<\breve{\Vector{r}}(t_{d,k})^\wedge \delta\boldsymbol{\phi}_0
-
\Vector{v}(t_{d,k})\<\delta\tau
+
\Matrix{C}_0\<\Matrix{\Psi}^{(r)}_{\omega}(t_{d,k})\<\delta\Vector{b}_{\omega}
+
\Matrix{C}_0\<\Matrix{\Psi}^{(r)}_{a}(t_{d,k})\<\delta\Vector{b}_a
&=
\Vector{0}_{3\times 1},
\label{eqn:SGal3_two_pose_rk_rewrite}
\\
\breve{\Matrix{C}}(t_{d,k})^\T\delta\boldsymbol{\phi}_0
-
\Vector{\omega}(t_{d,k})\<\delta\tau
+
\Matrix{\Psi}^{(\phi)}_{\omega}(t_{d,k})\<\delta\Vector{b}_{\omega}
&=
\Vector{0}_{3\times 1}.
\label{eqn:SGal3_two_pose_phik_rewrite}
\end{align}

As before, we consider only the case where the gyroscope and accelerometer bias perturbations are zero.
Then \Cref{eqn:SGal3_two_pose_phik_rewrite} reduces to
\begin{equation}
\delta\boldsymbol{\phi}_0
=
\breve{\Matrix{C}}(t_{d,k})\<
\Vector{\omega}(t_{d,k})\<\delta\tau
=
\Vector{\omega}_0(t_{d,k})\<\delta\tau,
\quad
k = 1, 2,
\end{equation}
where, again, $\Vector{\omega}_0(t) \Defined \breve{\Matrix{C}}(t)\<\Vector{\omega}(t)$ is the angular velocity expressed in the initial-vehicle frame.
For a nonzero delay perturbation to remain in the nullspace, these two expressions must agree, so
\begin{equation}
\label{eqn:SGal3_two_pose_time_condition}
\Vector{\omega}_0(t_{d,1})=\Vector{\omega}_0(t_{d,2}).
\end{equation}
Under this condition,
\begin{equation}
\delta\boldsymbol{\phi}_0
=
\Vector{\omega}_0(t_{d,1})\<\delta\tau.
\label{eqn:SGal3_two_pose_phi_tau}
\end{equation}
Next, subtracting \Cref{eqn:SGal3_two_pose_rk_rewrite} for $k=1$ and $k=2$ eliminates $\delta\Vector{r}_0$ and gives
\begin{equation}
(t_2 - t_1)\<\delta\Vector{v}_0
-
\Matrix{C}_0
\Bigl(
\breve{\Vector{r}}(t_{d,2})^\wedge-\breve{\Vector{r}}(t_{d,1})^\wedge
\Bigr)\delta\boldsymbol{\phi}_0
-
\bigl(\Vector{v}(t_{d,2})-\Vector{v}(t_{d,1})\bigr)\<\delta\tau
=
\Vector{0}_{3\times 1}.
\end{equation}
Substituting \Cref{eqn:SGal3_two_pose_phi_tau} yields
\begin{equation}
\label{eqn:SGal3_two_pose_v_tau}
\delta\Vector{v}_0
=
\Vector{d}_v\<\delta\tau,
\end{equation}
where
\begin{equation}
\Vector{d}_v
\Defined
\frac{1}{t_2 - t_1}\<
\Matrix{C}_0
\Bigl(
\breve{\Vector{r}}(t_{d,2})^\wedge - \breve{\Vector{r}}(t_{d,1})^\wedge
\Bigr)\Vector{\omega}_0(t_{d,1})
+
\frac{1}{t_2 - t_1}\<
\bigl(\Vector{v}(t_{d,2}) - \Vector{v}(t_{d,1})\bigr).
\end{equation}
Finally, substituting \Cref{eqn:SGal3_two_pose_phi_tau} and \Cref{eqn:SGal3_two_pose_v_tau} into \Cref{eqn:SGal3_two_pose_rk_rewrite} at $k = 1$ gives
\begin{equation}
\delta\Vector{r}_0
=
\Vector{d}_r\<\delta\tau,
\label{eqn:SGal3_two_pose_r_tau}
\end{equation}
where
\begin{equation}
\Vector{d}_r
\Defined
-t_{d,1}\<\Vector{d}_v +
\Matrix{C}_0\<\breve{\Vector{r}}(t_{d,1})^\wedge \Vector{\omega}_0(t_{d,1}) +
\Vector{v}(t_{d,1}).
\end{equation}
Therefore, when \Cref{eqn:SGal3_two_pose_time_condition} holds,
\begin{equation}
\Ker{\Matrix{H}_{1:2}^{(r\phi)}}
\supseteq
\Span{
\bbm
\Vector{d}_r \\
\Vector{d}_v \\
\Vector{\omega}_0(t_{d,1}) \\
1 \\
\Zero_{3\times 1} \\
\Zero_{3\times 1}
\ebm
}.
\end{equation}
This is the same basic situation seen in the $\LieGroupSGal{1}$ case: a change in the delay can be absorbed by corresponding changes in the initial condition, with the biases held fixed.
In three dimensions, the compensation also includes an orientation perturbation, and this nullspace direction exists only when the angular velocities at the two delayed measurement times, expressed in the initial-vehicle frame, are equal.

\subsection{Uninformative Trajectories}
\label{subsec:SGal3_bad_trajectories}

Identifiability of the delay (and the initial condition) depends not only on the number of measurements, but also on whether the input is sufficiently exciting.
As in the one-dimensional case, there exist trajectories for which the delay and initial condition cannot be uniquely determined, regardless of the number or type of measurements.
We again refer to such trajectories as \emph{uninformative}.

The relevant class of uninformative trajectories is characterized by motion that is generated by a constant Lie algebra element from the initial time onward.
We hold the biases fixed to isolate the delay–initial-condition symmetry, working directly with the resulting trajectory.
Suppose that, from the initial time $t = 0$ onward, the bias-corrected angular velocity and specific force are constant,
\begin{equation}
\Vector{\omega}(t) = \Vector{\omega}_0,
\quad
\Vector{s}(t) = \Vector{s}_0,
\end{equation}
for $\Vector{\omega}_0, \Vector{s}_0 \in \Real^3$, as in \Cref{subsec:SGal1_bad_trajectories}.%
\footnote{Under constant vehicle-frame angular velocity, the initial-frame rate $\Vector{\omega}_0(t) = \breve{\Matrix{C}}(t)\<\Vector{\omega}_0$ of \Cref{subsec:SGal3_jacobian_nullspace} reduces to the constant $\Vector{\omega}_0$, since the rotation is about its own axis.}
In this case, the state-transition matrix $\Matrix{\Phi}(t)$ is generated by the exponential map of a fixed Lie algebra element.
Because $\Matrix{\Phi}(0) = \Identity_5$, the generator is not arbitrary: it must be chosen to match the zero initial conditions built into $\Matrix{\Phi}(t)$.
Specifically,
\begin{equation}
\Matrix{\Phi}(t)
=
\Matexp{t\<\Vector{\xi}^{\wedge}_{c}},
\quad
\Vector{\xi}^{\wedge}_{c}
=
\bbm
\Vector{\omega}_0^{\wedge} & \Vector{s}_0 & \Zero \\
\Zero & 0 & 1 \\
\Zero & 0 & 0
\ebm.
\end{equation}
It follows that
\begin{equation}
\Matrix{\Phi}(t_k - \tau)\<
\Matrix{\Phi}(t_k - \alttau)^{-1}
=
\Matexp{(\alttau - \tau)\<
\Vector{\xi}^{\wedge}_{c}},
\end{equation}
which is independent of $t_k$.

Returning to the measurement model, the same transformation used in the single-measurement analysis therefore applies simultaneously to all measurements.
Define
\begin{equation}
\bar{\Matrix{X}}_0
=
\Matrix{T}(\tau - \alttau)\<
\Matrix{X}_0\<
\Matrix{\Phi}(\alttau - \tau),
\end{equation}
where the left time translation keeps $\bar{\Matrix{X}}_0$ in the isochronous subgroup.
Then, for all measurement times $t_k$,
\begin{equation}
\Matrix{Y}_k
=
\Matrix{T}(\tau)\<
\Matrix{X}_0\<
\Matrix{\Phi}(t_k - \tau)
=
\Matrix{T}(\alttau)\<
\bar{\Matrix{X}}_0\<
\Matrix{\Phi}(t_k - \alttau).
\end{equation}
The entire measurement set can therefore be explained by a different value of $\tau$ together with a corresponding change in the initial condition.

To connect this with \Cref{subsec:symmetries}, consider the parameter tuple
\begin{equation*}
\StateVector
\Defined
(\Matrix{X}_0, \tau).
\end{equation*}
As before, assume that $\tau$ lies in the interior of the admissible delay
interval, and choose $\epsilon>0$ sufficiently small that $\tau + \alpha$ and all shifted delayed evaluation times remain admissible for every $\alpha\in(-\epsilon,\epsilon)$.
Define a transformation $\mathcal{S}_\alpha$ on this parameter tuple by
\begin{equation*}
\mathcal{S}_\alpha :
(\Matrix{X}_0, \tau)
\mapsto
\bigl(
\Matrix{T}(-\alpha)\<
\Matrix{X}_0\<
\Matrix{\Phi}(\alpha),\,
\tau + \alpha
\bigr),
\end{equation*}
where the leading time translation keeps the transformed initial condition in the isochronous subgroup.
For motions of the form considered here, the parameter tuples $(\Matrix{X}_0, \tau)$ and $\mathcal{S}_\alpha\big((\Matrix{X}_0,\tau)\big)$ generate the same measurement history for all sufficiently small $\alpha$.
Therefore, \Cref{cor:symmetry_unidentifiable_discrete} applies (and likewise \Cref{thm:symmetry_unidentifiable_continuous}), and local identifiability fails.

\subsection{Dealing With Gravity}
\label{subsec:SGal3_now_with_gravity}

As a final step, we add gravity to the aided navigation model from \Cref{subsec:SGal3_system}.
The inclusion of gravity does not alter the conclusions of \Cref{subsec:SGal3_identifiability}: the measurement requirements are unchanged, and the excitation requirements, expressed in terms of the bias-corrected inputs, are identical.
What changes is the \emph{shape} of the uninformative trajectories, through a known, gravity-dependent factor in the trajectory and in the symmetry.

The continuous-time kinematics are now
\begin{align}
\dot{\Matrix{C}}(t)
& =
\Matrix{C}(t)\bigl(\Vector{\omega}_m(t) - \Vector{b}_{\omega}\bigr)^{\wedge}, \\
\label{eqn:gravity_added_force}
\dot{\Vector{v}}(t)
& =
\Matrix{C}(t)\bigl(\Vector{a}_m(t) - \Vector{b}_a\bigr) + \Vector{g}, \\
\dot{\Vector{r}}(t)
& =
\Vector{v}(t).
\end{align}
The measured specific force $\Vector{a}_m(t)$ is unchanged, but its relation to the coordinate acceleration is now $\Vector{a}(t) = \Matrix{C}(t)\<\Vector{s}(t) + \Vector{g}$.
The propagated terms $\breve{\Matrix{C}}(t)$, $\breve{\Vector{v}}(t)$, and $\breve{\Vector{r}}(t)$ satisfy the same equations, \Crefrange{eqn:SGal3_C_breve_dot}{eqn:SGal3_r_breve_dot}, and are still determined by the bias-corrected IMU history.
However, $\breve{\Vector{v}}(t)$ and $\breve{\Vector{r}}(t)$ now incorporate an integrated component opposite $\Vector{g}$, which is added back to give the trajectory,
\begin{align*}
\Matrix{C}(t)
&=
\Matrix{C}_0\<\breve{\Matrix{C}}(t), \\
\Vector{v}(t)
&=
\Matrix{C}_0\<\breve{\Vector{v}}(t) + \Vector{v}_0 + t\<\Vector{g}, \\
\Vector{r}(t)
&=
\Matrix{C}_0\<\breve{\Vector{r}}(t) + t\<\Vector{v}_0 + \Vector{r}_0 + \tfrac{1}{2}\<t^2\<\Vector{g}.
\end{align*}
These additive terms are generated by a single element of the Lie algebra.
Define the \emph{gravity generator}
\begin{equation}
\label{eqn:sgal3_gravity_generator}
\Vector{\xi}_{g}
\Defined
\bbm
\Zero_{1 \times 3} & \Vector{g}^\T & \Zero_{1 \times 3} & 1
\ebm^\T
\in \Real^{10},
\end{equation}
with the gravity vector in the boost slot, and let
\begin{equation}
\label{eqn:SGal3_free_fall_subgroup}
\Matrix{W}(\beta)
\Defined
\Matexp{\beta\<\Vector{\xi}_{g}^{\wedge}}
=
\bbm
\Identity_3 & \beta\<\Vector{g} & \tfrac{1}{2}\<\beta^2\Vector{g} \\
\Zero & 1 & \beta \\
\Zero & 0 & 1
\ebm
\end{equation}
denote the corresponding one-parameter subgroup, where the closed form follows from \Cref{eqn:SGal3_exp_closed} with $\Vector{\phi} = \Zero$.
As in the $\LieGroupSGal{1}$ case, the quadratic position term reflects the boost--time bracket $[\Matrix{B},\< \Matrix{K}] = \Matrix{P}$.
The contribution of gravity to the trajectory over the interval $[0, t]$ is then the product of the two one-parameter subgroups,
\begin{equation}
\label{eqn:SGal3_gravity_factor}
\Matrix{G}(t)
\Defined
\Matrix{W}(t)\<\Matrix{T}(-t)
=
\bbm
\Identity_3 & t\<\Vector{g} & -\tfrac{1}{2}\<t^2\Vector{g} \\
\Zero & 1 & 0 \\
\Zero & 0 & 1
\ebm,
\end{equation}
which lies in the isochronous subgroup.\footnote{Note that $\Matrix{G}(t)$ is not itself a one-parameter subgroup.}
The trajectory is then
\begin{equation}
\label{eqn:SGal3_gravity_state}
\Matrix{X}(t) = \Matrix{G}(t)\<\Matrix{X}_0\<\Matrix{\Phi}(t),
\end{equation}
where $\Matrix{X}_0$ is isochronous and $\Matrix{\Phi}(t)$ is the usual state-transition matrix, which contributes the group time coordinate of $\Matrix{X}(t)$.

\begin{infobox}[t!]
\begin{mdframed}
\textbf{Which Trajectories Are Uninformative?}
\vspace{0.5\baselineskip}
\par\noindent Several familiar trajectory shapes turn out to be uninformative.
Each is generated by a constant pair $(\Vector{\omega}_0, \Vector{s}_0)$ of bias-corrected inputs; a few concrete cases are listed below.
\begin{itemize}
\setlength{\itemsep}{2pt}
\item Ballistic motion, produced by zero specific force and constant (possibly zero) vehicle-frame angular velocity.
The vehicle is in free fall, tumbling at a constant rate, and the trajectory is a parabola.

\item Constant-thrust motion, produced by zero angular velocity and constant nonzero specific force.
The coordinate acceleration is constant, and the trajectory is in general a parabolic arc.

\item Circular or helical motion, produced by constant vehicle-frame angular velocity and constant specific force.
A coordinated turn at constant bank angle is an example.

\item General curvilinear motion, produced by any other constant pair $(\Vector{\omega}_0, \Vector{s}_0)$.
The vehicle rotates at a constant rate while the vehicle-frame velocity varies, giving a curved path.
\end{itemize}
What unifies these cases is not any particular geometry but the input condition: the bias-corrected angular velocity and specific force are constant from the initial time through the delayed measurement times, so the delayed measurement history cannot distinguish a shift of the delay from a compensating shift of the initial condition.
\end{mdframed}
\vspace{-\baselineskip}
\end{infobox}

The delayed measurement model follows that in \Cref{eqn:SGal3_delayed_measurement}, but with gravity now included,
\begin{equation}
\label{eqn:SGal3_delayed_measurement_gravity}
\Matrix{Y}_k
\Defined
\Matrix{T}(\tau)\<
\Matrix{X}(t_k - \tau)
=
\Matrix{T}(\tau)\<
\Matrix{G}(t_k - \tau)\<
\Matrix{X}_0\<
\Matrix{\Phi}(t_k - \tau),
\end{equation}
and has components
\begin{align*}
\Matrix{C}(t_k - \tau)
& =
\Matrix{C}_0\<\breve{\Matrix{C}}(t_k - \tau), \\
\Vector{v}(t_k - \tau)
& =
\Matrix{C}_0\<\breve{\Vector{v}}(t_k - \tau) + 
\Vector{v}_0 + (t_k - \tau)\<\Vector{g}, \\
\Vector{r}(t_k - \tau)
& =
\Matrix{C}_0\<\breve{\Vector{r}}(t_k - \tau) + 
(t_k - \tau)\<\Vector{v}_0 + 
\Vector{r}_0 + 
\tfrac{1}{2}\<(t_k - \tau)^2\Vector{g},
\end{align*}
with group time coordinate $t_k$.

The measurement Jacobian retains the form of \Cref{eqn:single_full_jacobian}.
Gravity affects only the delay column: the entries $\Vector{v}(t_{d,k})$ and $\Vector{a}(t_{d,k})$ now include the contributions $t_{d,k}\<\Vector{g}$ and $\Vector{g}$, respectively, since $\Vector{a}(t) = \Matrix{C}(t)\<\Vector{s}(t) + \Vector{g}$.
Because the parameter dependence of the Jacobian is identical, the measurement-count and rank analysis conclusions of \Cref{subsec:SGal3_identifiability} carry over without any changes.

Finally, we revisit the class of uninformative trajectories.
Assume, as in \Cref{subsec:SGal3_bad_trajectories}, that the bias-corrected angular velocity and specific force are constant, $\Vector{\omega}(t) = \Vector{\omega}_0$ and $\Vector{s}(t) = \Vector{s}_0$, so that $\Matrix{\Phi}(t) = \Matexp{t\<\Vector{\xi}^{\wedge}_{c}}$ with constant generator $\Vector{\xi}_{c}$.
For $\alpha$ near zero, define
\begin{equation}
\label{eqn:SGal3_gravity_symmetry}
\bar{\Matrix{X}}_0
\Defined
\Matrix{T}(-\alpha)\<\Matrix{G}(\alpha)\<
\Matrix{X}_0\<
\Matrix{\Phi}(\alpha),
\quad
\bar{\tau} \Defined \tau + \alpha.
\end{equation}
The transformed initial condition is isochronous, since the time components of $\Matrix{T}(-\alpha)$ and $\Matrix{\Phi}(\alpha)$ cancel out.
Multiplying out the factors gives the composition identity
\begin{equation}
\label{eqn:SGal3_gravity_closure}
\Matrix{T}(\alpha)\<\Matrix{G}(u)\<\Matrix{T}(-\alpha)\<\Matrix{G}(\alpha)
=
\Matrix{G}(u + \alpha),
\end{equation}
and hence, with $u = t_k - \tau - \alpha$,
\begin{align*}
\bar{\Matrix{Y}}_k
& =
\Matrix{T}(\bar{\tau})\<\Matrix{G}(u)\<\bar{\Matrix{X}}_0\<\Matrix{\Phi}(u) \\
& =
\Matrix{T}(\tau)\<
\bigl[\Matrix{T}(\alpha)\<\Matrix{G}(u)\<\Matrix{T}(-\alpha)\<\Matrix{G}(\alpha)\bigr]\<
\Matrix{X}_0\<
\Matrix{\Phi}(\alpha)\<\Matrix{\Phi}(u) \\
& =
\Matrix{T}(\tau)\<\Matrix{G}(t_k - \tau)\<\Matrix{X}_0\<\Matrix{\Phi}(t_k - \tau)
=
\Matrix{Y}_k,
\end{align*}
so the delayed measurement history is unchanged and the delay and initial condition remain jointly unidentifiable.
The symmetry differs from the gravity-free case of \Cref{subsec:SGal3_bad_trajectories} only by the known factor $\Matrix{G}(\alpha)$, which accounts for the contribution of gravity over the shifted interval.
In fact, the combined left factor is itself a one-parameter subgroup,
\begin{equation}
\label{eqn:SGal3_gravity_left_generator}
\Matrix{T}(-\alpha)\<\Matrix{G}(\alpha)
=
\Matexp{\alpha\<\Vector{\xi}_{g^-}^{\wedge}},
\quad
\Vector{\xi}_{g^-}
\Defined
\bbm
\Zero_{1 \times 3} & \Vector{g}^\T & \Zero_{1 \times 3} & -1
\ebm^\T,
\end{equation}
so gravity enters the symmetry only through the boost slot of the left generator.%
\footnote{The same group element appears as the gravity factor in equivariant IMU preintegration~\cite{2025_Delama_Equivariant}.}
Setting $\Vector{g} = \Zero$ gives back $-\Vector{\xi}_{d}$ and the gravity-free result.

\section{Conclusion}
\label{sec:conclusion}

In this report, we studied delay identifiability in aided inertial navigation using the special Galilean group.
We showed that identifiability depends on both the available measurements and the form of the trajectory, characterizing when the delay and initial condition can be recovered from full-state, pose, and position-only measurements. The same conclusions follow from a direct constraint-based analysis and from a Jacobian-based analysis.
Gravity fits naturally within the framework also: it enters the trajectory, and the symmetry, through a known group factor, leaving the measurement and excitation requirements unchanged.

A central theme throughout is that identifiability depends not only on the number and type of measurements, but also on the motion itself.
This dependence leads to a characterization of uninformative trajectories in terms of symmetry.
When the bias-corrected inputs are constant, the state-transition matrix traces a one-parameter subgroup of the special Galilean group, and shifts in the delay can be absorbed into corresponding changes in the initial condition without altering the measurement history.
The use of Lie algebra generators to characterize uninformative trajectories on the special Galilean group is, to our knowledge, new.

\appendix

\crefalias{section}{appendix}

\titleformat{\section}{\normalfont\Large\bfseries}{Appendix~\thesection}{0.75em}{}

\section{Revision History}
\label{app:revisions}

A rough list of revisions to the report follows.

\begin{itemize}
\setlength{\itemsep}{2pt}
\item Revision 1.01, 2026-02-03 --- Initial release.
\item Revision 1.02, 2026-02-15 --- Added further details on continuous symmetries.
\item Revision 1.03, 2026-03-19 --- Split $\LieGroupSGal{3}$ analysis into component form (position, orientation, etc.).
\item Revision 1.04, 2026-03-30 --- Updated $\LieGroupSGal{3}$ results to include six position measurements.
\item Revision 1.05, 2026-04-22 --- Added short $\LieGroupSGal{1}$ filtering discussion.
\item Revision 1.06, 2026-05-20 --- Reformatted $\LieGroupSGal{3}$ Jacobians.
\item Revision 1.07, 2026-05-30 --- Fixed various typos throughout.
\item Revision 1.08, 2026-06-29 --- Clarified (corrected) gravity-free model in \Cref{subsec:SGal3_system}.
\item Revision 1.09, 2026-07-01 --- Fixed commentary on uninformative trajectories.
\item Revision 1.10, 2026-07-12 --- Added treatment of gravity (as part of the symmetry) in \Cref{subsec:SGal3_now_with_gravity}.
\item Revision 1.11, 2026-08-01 --- Minor clarifications throughout (sufficiency versus necessity, etc).
\end{itemize}

\printbibliography

\end{document}